\newtheorem{theorem}{Theorem}[section]
\newenvironment{proof}[1][Proof]{\begin{trivlist}
\item[\hskip \labelsep {\bfseries #1}]}{\end{trivlist}}
\newenvironment{definition}[1][Definition]{\begin{trivlist}
\item[\hskip \labelsep {\bfseries #1}]}{\end{trivlist}}
\newcommand{\qed}{\nobreak \ifvmode \relax \else
      \ifdim\lastskip<1.5em \hskip-\lastskip
      \hskip1.5em plus0em minus0.5em \fi \nobreak
      \vrule height0.75em width0.5em depth0.25em\fi}
\def\argmin{\mathop{\mathrm{argmin}}}
\def\C{{\bf C}}
\newcommand{\x}{{\bf x}}
\newcommand{\z}{{\bf z}}
\newcommand{\w}{{\bf w}}
\renewcommand{\xi}{{\bf x}_i}
\newcommand{\xsi}{{\bf x}^s_i}
\newcommand{\xti}{{\bf x}^t_i}
\newcommand{\ysi}{{\bf y}^s_i}
\newcommand{\xtj}{{\bf x}^t_j}
\newcommand{\T}{{\mathcal{T}}}
\newcommand{\Tmus}{{\T}{\#\mu_s}}
\newcommand{\Gzero}{{\boldsymbol{\gamma}}_{0}}
\newcommand{\G}{{\boldsymbol{\gamma}}}
\newcommand{\dist}[1]{\text{d}(#1)}
\newcommand{\Closs}[1]{\mathcal{D}(#1)}
\newcommand{\prob}[1]{\mathcal{P}(#1)}
\newcommand{\proba}[1]{\mathcal{P}(#1)}
\newcommand{\probas}[1]{\mathcal{P}_s(#1)}
\newcommand{\probat}[1]{\mathcal{P}_t(#1)}
\def\ps{{\mathcal{P}_s}}
\def\pt{\mathcal{P}_t}
\newcommand{\ft}[1]{f(#1)}
\newcommand{\Xt}{{\bf X}_t}
\newcommand{\Xs}{{\bf X}_s}
\newcommand{\Yt}{{\bf Y}_t}
\newcommand{\Ys}{{\bf Y}_s}
\def\R{\mathbb{R}}
\def\C{\mathbf{C}}
\def\x{\mathbf{x}}
\def\d{\mathbf{d}}
\def\ie{{\em i.e.}}
\def\err{\text{err}}
\DeclareMathOperator*{\E}{\mathbb{E}}
\def\matP{{\mathbf{P}}}
\def\loss{\mathcal{L}}
\def\equaldef{\stackrel{\text{def}}{=}}
\newtheorem{proposition*}{\textbf{Proposition}}
\newtheorem{theorem*}{\textbf{Theorem}}
\newtheorem{Lemma*}{\textbf{Lemma}}
\newtheorem{remark*}{\textbf{Remark}}
\newtheorem{definition*}{\textbf{Definition}}
\title{Joint distribution optimal transportation for domain adaptation}
\author{
  Nicolas Courty\thanks{Both authors contributed equally.} \\
 Universit{\'e} de Bretagne Sud,\\
  IRISA, UMR 6074, CNRS,\\
  \texttt{courty@univ-ubs.fr} \\
  \And
    R{\'e}mi Flamary$^*$\\
 Universit{\'e} Côte d’Azur,\\
  Lagrange, UMR 7293 , CNRS, OCA\\
  \texttt{remi.flamary@unice.fr} \\
  \And
  Amaury Habrard\\
Univ Lyon, UJM-Saint-Etienne, CNRS,\\
Lab. Hubert Curien UMR 5516, F-42023\\
 \texttt{amaury.habrard@univ-st-etienne.fr}\\
  \And
  Alain Rakotomamonjy\\
Normandie Universite \\
 Universit{\'e} de Rouen, LITIS EA 4108\\
 \texttt{alain.rakoto@insa-rouen.fr}\\ 
  %% examples of more authors
  %% \And
  %% Coauthor \\
  %% Affiliation \\
  %% Address \\
  %% \texttt{email} \\
  %% \AND
  %% Coauthor \\
  %% Affiliation \\
  %% Address \\
  %% \texttt{email} \\
  %% \And
  %% Coauthor \\
  %% Affiliation \\
  %% Address \\
  %% \texttt{email} \\
  %% \And
  %% Coauthor \\
  %% Affiliation \\
  %% Address \\
  %% \texttt{email} \\
}
\begin{document} 
% \nipsfinalcopy is no longer used
\maketitle

\begin{abstract} 
This paper deals with the unsupervised domain adaptation problem, where one wants to estimate a prediction function $f$ in a given target domain without any labeled sample by exploiting the knowledge available from a source domain where labels are known. Our work makes the following assumption: there exists a  non-linear transformation between the joint feature/label space distributions of the two domain $\ps$ and $\pt$ that can be estimated with optimal transport. We propose a solution of this problem %with optimal transport, 
that allows to recover an estimated target $\pt^f=(X,f(X))$ by optimizing simultaneously the optimal coupling and $f$. We show that our method corresponds to the minimization of a bound on the target error, and provide an efficient algorithmic solution, for which convergence is proved. The versatility of our approach, both in terms of class of hypothesis 
or loss functions is demonstrated with real world classification and regression problems, for which we reach or surpass state-of-the-art results.

%This work introduces a novel approach for domain adaptation using optimal transport (OT) to align the source and target joint sample/label distributions.  Since in unsupervised domain adaptation no label information is available we propose to estimate the prediction function $f$ that minimizes the OT distance between the source empirical joint distribution ($\x,y$) and the augmented distribution of ($\x,f(\x)$) on the target data.
%We show theoretically and empirically that estimating this function corresponds to minimizing a bound on the target error. We propose a simple algorithm based on block coordinate descent to solve the resulting problem and prove its convergence. We also discuss both regression and classification domain adaptation in our framework and exhibit novel learning schemes that use OT to transfer labels between domains. Finally, numerical experiments on Computer Vision,  Amazon review prediction and wifi localization datasets show that the proposed approach has state of the art performance when compared to modern domain adaptation methods.
\end{abstract} 

\section{Introduction}
\label{sec:intro}

% !TEX root = jdot.tex

In the context of supervised learning, one generally assumes that the test data is a realization of the same process
that generated the learning set. Yet, in many practical applications it is often not the case, since several factors can slightly alter this process.
The particular case of visual adaptation~\cite{Pat14} in computer vision is a good example: given a new dataset of images without 
any label, one may want to exploit a different annotated dataset, provided that 
they share sufficient common information and labels. However, the generating process can be different in several aspects, such as 
the conditions and devices used for acquisition, different pre-processing, different compressions, etc. Domain adaptation 
techniques aim at alleviating this issue by transferring knowledge between domains~\cite{Pan10}. We propose in this paper a 
principled and theoretically founded way of tackling this problem. 
    
%\ar{dans le paragraphe qui suit X et Y n'ont pas √©t√© d√©fini. faut voir si c'est important/clair ou pas}\ah{Pas dramatique a mon avis, mais on peut juste rajouter: where $X$ is the instance space and $Y$ the output space (ou label set)}
The domain adaptation (DA) problem is not new and has received a lot of attention during the past ten years. State-of-the-art methods are
mainly differing by the assumptions made over the change in data distributions. In the {\em covariate shift} assumption, the differences
between the domains are characterized by a change in the feature distributions $\prob{X}$, while the conditional distributions $\prob{Y|X}$
remain unchanged ($X$ and $Y$ being respectively the instance and label spaces). Importance re-weighting can be used to learn a new classifier (e.g.~\cite{Sug08}), provided that the overlapping of the 
distributions is large enough. Kernel alignment~\cite{Zhang13} has also been considered for the same purpose. Other types of method, denoted 
as {\em Invariant Components} by Gong and co-authors~\cite{Gong16}, are looking for a transformation $\T$  such that the new representations 
of input data are matching, {\em i.e.} $\probas{\T(X)}=\probat{\T(X)}$. Methods are then differing by: {\em i)} The considered class of transformation, 
that are generally defined as projections (e.g.~\cite{gong12,Baktashmotlagh13,Fernando13,long2014,Gong16}), affine transform~\cite{Zhang13}
or non-linear transformation as expressed by neural networks~\cite{ganin2015,ganin2016} 
{\em ii)} The types of divergences used to compare $\probas{\T(X)}$ and $\probat{\T(X)}$, such as Kullback Leibler~\cite{Si10} or {\em Maximum 
Mean Discrepancy}~\cite{long2014,Gong16}. Those divergences usually require that the distributions share a common support to be defined. 
A particular case is found in the use of optimal transport, introduced for domain adaptation by~\cite{courty14,courty2016}. 
$\T$ is then defined to be a push-forward operator such that $\probas{X}=\probat{\T(X)}$ and that minimizes %minimizing
a global transportation effort or cost between 
distributions. The associated divergence is the so-called Wasserstein metric, that has a natural Lagrangian formulation and avoids the estimation
of continuous distribution by means of kernel. As such, it also alleviates the need for a shared support. 

The methods discussed above implicitly assume that the conditional distributions are  unchanged by $\T$, {\em i.e.} $\probas{Y|\T(X)}\approx\probat{Y|\T(X)}$
but there is no clear reason for this assumption to hold. A more general approach is to adapt both marginal feature and conditional distributions by minimizing a global divergence between them. However, 
this task is usually hard since no label is available in the target domain and therefore no empirical version $\probat{Y|X}$ can be used. This was achieved by 
restricting to specific class of transformation such as projection~\cite{long2014,Gong16}. %In this work, we consider a similar approach but with a broader class of potentially 
%non-linear transformation by relying on optimal transport.  

\textbf{Contributions and outline.}
In this work we propose a novel framework for unsupervised domain adaptation between joint distributions. We propose to find a function $f$ that predicts an output value
given an input $\x \in \mathcal{X}$, and
%a prediction function $f$ 
                                  that minimizes the optimal transport loss between the joint source distribution $\ps$ and an estimated target joint distribution $\pt^f=(X,f(X))$ depending on $f$ (detailed in Section \ref{sec:method}). The method is denoted as JDOT for ``Joint Distribution Optimal Transport" in the remainder. We show that the resulting optimization problem stands for a minimization of a bound  on the target error of $f$ (Section~\ref{sec:theory})  and propose an efficient algorithm to solve it (Section~\ref{sec:algo}). Our approach is very general and does not require to learn explicitly a transformation, as it directly solves for the best function. We show that it can handle both regression and classification problems with a large class of functions $f$ including kernel machines and neural networks. %\ah{est qu'il faut remettre une couche sur l'optimisation jointe par opposition aux approches classiques qui fonctionnent en 2 temps: recherche de la transformation puis apprentissage du classifieur?} 
We finally provide several numerical experiments on real regression and classification problems that show the performances of JDOT over the state-of-the-art (Section~\ref{sec:expe}). 

%\nc{j'ai retouch√© cette partie pour fusionner l'outline et les contribs}

%\textbf{Outline. }
%The paper is organized as follows. Section \ref{sec:method} introduces JDOT for domain adaptation and expresses the optimization problem. In Section \ref{sec:theory} we exhibit a  bound on the target error. We propose in Section \ref{sec:algo} an algorithm to solve the resulting optimization problem and discuss JDOT variants for regression and classification problems. Finally we present several numerical experiments on real datasets in Section \ref{sec:expe} and conclude the paper in Section \ref{sec:conc}. 

%%% Local Variables: 
%%% TeX-master: "tloss"
%%% End: 

\section{Joint distribution Optimal Transport}
\label{sec:method}

% !TEX root = jdot.tex
Let $\Omega\in\R^d$ be a compact input measurable space of dimension $d$ and $\mathcal{C}$ the set of labels. $\prob{\Omega}$ denotes the set of all the probability measures over $\Omega$. The standard learning paradigm assumes classically the existence of a set of data  $\Xs = \{ \xsi \}_{i=1}^{N_s}$ associated with a set of class label information $\Ys = \{\ysi\}_{i=1}^{N_s}$, $\ysi \in \mathcal{C}$ (the learning set), and a data set with unknown labels  $\Xt = \{  \xti \}_{i=1}^{N_t}$ (the testing set). In order to determine the set of labels $\Yt$  associated with $\Xt$ , one usually relies on an empirical estimate of the joint probability distribution $\proba{X,Y} \in \prob{\Omega\times\mathcal{C}}$ from $(\Xs,\Ys)$, and the assumption that $\Xs$ and $\Xt$ are drawn from the same distribution $\mu \in \prob{\Omega}$.  In the considered adaptation problem, one assumes the existence of two distinct joint probability distributions $\probas{X,Y}$ and $\probat{X,Y}$ which correspond respectively to two different {\em source} and {\em target} domains. We will write  $\mu_s$ and $\mu_t$ their respective marginal distributions over $X$. %We also denote $f_s$ and $f_t$ the true labeling functions. We address the case where $f_s \neq f_t$, even though it has been highlighted in~\cite{bendavid10a} that this difference needs to be small in 
%practice for the adaptation to succeed.  We start by recalling basic facts on optimal transport and its adaptation to domain adaptation (DA). We then proceed to our proposal. 
 
\subsection{Optimal transport in domain adaptation}
The Monge problem is seeking for a map $\T_0:\Omega\rightarrow\Omega$ that pushes $\mu_s$ toward $\mu_t$ defined as:
\begin{equation*}
 \T_0 = \argmin_\T  \int_{\Omega} \dist{\x,\T(\x)}d\mu_s(\x), \;\;\;\;\; \text{s.t.  } \; \Tmus=\mu_t, 
\end{equation*}
where  $\Tmus$ the {\em image measure} of $\mu_s$ by $\T$, verifying:
\begin{equation}
 \Tmus(A) = \mu_t(\T^{-1}(A)),\;\;\forall\text{ Borel subset $A \subset \Omega$},
\end{equation}
and $\text{d}:\Omega \times \Omega \rightarrow \mathbb{R}^+$ is a metric. In the remainder, we will always consider 
without further notification the case where $\text{d}$ is the squared Euclidean metric. When $\T_0$ exists, it is called an optimal transport map,
but it is not always the case ({\em e.g.} assume that $\mu_s$ is defined by one Dirac measure and $\mu_t$ by two). A 
relaxed version of this problem has been proposed by Kantorovitch~\cite{Kantorovich42}, who rather seeks for a transport plan 
(or equivalently a joint probability distribution) $\G \in \proba{\Omega \times \Omega}$ such that:
\begin{equation}
 \Gzero = \argmin_{\G \in \Pi(\mu_s,\mu_t)}  \int_{\Omega\times\Omega} d(\x_1,\x_2)d\G(\x_1,\x_2),
 \label{eq:kanto}
\end{equation}
where  $\Pi(\mu_s,\mu_t)=\{\G \in \proba{\Omega \times \Omega} | p^+\#\G=\mu_s, p^-\#\G=\mu_t\}$ and $p^+$ and $p^-$ denotes the two marginal projections of 
$\Omega \times \Omega$ to $\Omega$. Minimizers of this problem are called optimal transport plans. Should $\Gzero$ be of the form $(id \times \T)\#\mu_s$, then 
the solution to Kantorovich and Monge problems coincide. As such the Kantorovich relaxation can be seen as a generalization of the Monge problem, with less constraints on 
the existence and uniqueness of solutions~\cite{santambrogio2015}. 

Optimal transport has been used in DA as a principled way to bring the source and target distribution closer~\cite{courty14,courty2016,perrot2016}, by 
seeking for a transport plan between the empirical distributions of $\Xs$ and $\Xt$ and interpolating $\Xs$ thanks to a barycentric mapping~\cite{courty2016},
or by estimating a mapping which is not the solution of Monge problem but allows to map unseen samples~\cite{perrot2016}. Moreover, they 
show that better constraining the structure of $\G$ through entropic or classwise regularization terms helps in achieving better empirical results. 
%\nc{[ref au papier de Ievgen et Amaury ?]}

\subsection{Joint distribution optimal transport loss}
The main idea of this work is is to handle a change in both marginal and conditional distributions. As such, we are looking for a transformation $\T$ that will align directly the joint distributions $\ps$ and $\pt$. 
Following the Kantovorich formulation of \eqref{eq:kanto}, $\T$ will be implicitly expressed through a coupling between both joint distributions as: 
%\ar{la dependence entre $\T$ et l'equation n'est pas claire} \ah{Peut etre qu'il ne faut pas parler de $\T$ ici, mais se contenter d'évoquer un bon alignement entre source et cible, on ne parle plus de $\T$ apres. Pour rendre peut etre les choses plus explicites on pourrait peut etre rajouter (avant ou apres l'equation): In words, we try to find a transport plan that tries to best align close source and target examples with similar labels.}\ar{ok j'achete.}
\begin{equation}
% \Gzero = \argmin_{\G \in \Pi(\ps,\pt)}  \int_{(\Omega\times\mathcal{C})\times(\Omega\times\mathcal{C})} \Closs{\x_1,y_1;\x_2,y_2}d\G(\x_1,\y_1;\x_2,y_2),
 \Gzero = \argmin_{\G \in \Pi(\ps,\pt)}  \int_{(\Omega\times\mathcal{C})^2} \Closs{\x_1,y_1;\x_2,y_2}d\G(\x_1,y_1;\x_2,y_2),
 \label{eq:tloss}
\end{equation}
%where $\nu$ is the Lebesgue measure on $\Omega$, $f_1:\Omega\rightarrow\mathcal{C}$ and $f_2:\Omega\rightarrow\mathcal{C}$ are two labelling functions, and 
where $\Closs{\x_1,y_1;\x_2,y_2}=\alpha \dist{\x_1,\x_2}+\mathcal{L}(y_1,y_2)$
%\begin{equation*}
%\Closs{\x_1,y_1;\x_2,y_2}=\alpha \dist{\x_1,\x_2}+\mathcal{L}(y_1,y_2)
%\end{equation*}
is a joint cost measure combining both the distances between the samples and a loss function $\mathcal{L}$ measuring the discrepancy between $y_1$ and $y_2$.
While this joint cost is specific (separable), we leave for future work the analysis of generic joint cost function.
Putting it in words, matching close source and target samples with similar labels costs few. $\alpha$ is a positive parameter which balances the  metric in the feature space and the loss. 
As such, when $\alpha \rightarrow +\infty$, this cost is dominated by the metric in the input feature space, and the solution of the coupling problem is the same 
as in~\cite{courty2016}. %When $\alpha \rightarrow 0$,  the optimal coupling matches similar elements from $\mathcal{C}$ in the sense of the loss function.
It can be shown that a minimizer to (\ref{eq:tloss}) always exists and is unique provided that $\Closs{\cdot}$ is lower semi-continuous (see~\cite{villani09}, Theorem 4.1),
which is the case when $\dist{\cdot}$ is a norm and  for every usual loss functions~\cite{rosasco2004}.

In the unsupervised DA problem, one does not have access to labels in the target domain, and as such it is not possible to find the optimal 
coupling. Since our goal is to find a function on the target domain ${f}:\Omega\rightarrow\mathcal{C}$, we suggest to replace $y_2$ by a proxy 
$\ft{\x_2}$. This leads to the definition of the following
%\nc{pas trop sur de cette denomination} \ah{je ne l'ai jamais vu, ca permet de renforcer l'idee du determinisme sur le label, mais en cas de doute peut etre ne rien dire et juste mettre "the following jointe distribution"}
joint distribution that uses a given function $f$ as a proxy for $y$:
\begin{equation}
\pt^f=(\x,f(\x))_{\x\sim \mu_t} \label{eq:proxy}
\end{equation}
%As discussed above, since we do not have access to labels in the target domain so we propose to find the function $f$ that will minimize the OT distance between $\ps$ and $\pt^f$.
In practice we consider empirical versions of $\ps$ and $\pt^f$, {\em i.e.} $\hat{\ps} = \frac{1}{N_s}\sum_{i=1}^{N_s} \delta_{\xsi,\ysi}$ 
and $\hat{\pt^f}  = \frac{1}{N_t}\sum_{i=1}^{N_t} \delta_{\xti,\ft{\xti}}$. $\G$ is then a matrix which belongs to $\Delta$ , \ie the transportation polytope of non-negative matrices between uniform distributions. 
Since our goal is to estimate a  prediction $f$ on the target domain, %we propose to find the classifier that is allows us to best the source label to the
we propose to find the one that produces predictions that match optimally source labels to the 
aligned target instances in the transport plan. For this purpose, we propose to solve the following problem for JDOT:
%Since our goal is to estimate a good classifier $f$ on the target domain, we propose to solve the following problem for JDOT
\begin{equation}
 \min_{f,\G \in \Delta}  \sum_{ij} \Closs{\xsi,\ysi;\xtj,f({\xtj})}\G_{ij} \quad \equiv \quad  \min_{f} W_1(\hat\ps,\hat{\pt^f})  %= \min_f W_1(\hat\ps,\hat{\pt^f})
 \label{eq:tloss2}
\end{equation} 
%which is equivalent to minimizing the Wasserstein distance $ W_1(\hat\ps,\hat{\pt^f})$ \emph{w.r.t.} $f$.
where $W_1$ is the 1-Wasserstein distance for the loss $\Closs{\x_1,y_1;\x_2,y_2}=\alpha \dist{\x_1,\x_2}+\mathcal{L}(y_1,y_2)$.
%The problem above is equivalent to minimizing the $W_1$ Wasserstein distance between the empirical distributions $\hat\ps$ and $\hat{\pt^f}$ with respect to $f$.  
We will make clear in the next section that the function $f$ we retrieve is theoretically sound with respect to the target error.
Note that in practice we add a regularization term for function $f$ in order to avoid overfitting as discussed in Section~\ref{sec:algo}.
%The inner regular optimal transport problem can be solved with efficient network flow solvers, the entropy regularized version~\cite{CuturiSinkhorn} or stochastic
%solvers~\cite{genevay2016}.
%We propose in Section~\ref{sec:theory} a theoretical justification of this strategy, and we discuss its algorithmic resolution in 
%Section~\ref{sec:algo}.
An illustration of JDOT for a regression problem is given in Figure \ref{fig:tloss_reg}. In this figure, we have very different joint and marginal distributions but we want to illustrate that the OT matrix $\G$ obtained using the true empirical distribution $\pt$ is very similar to the one obtained with the proxy $\pt^f$ which leads to a very good model for JDOT.

 \begin{figure}[!t]
\includegraphics[width=1\linewidth]{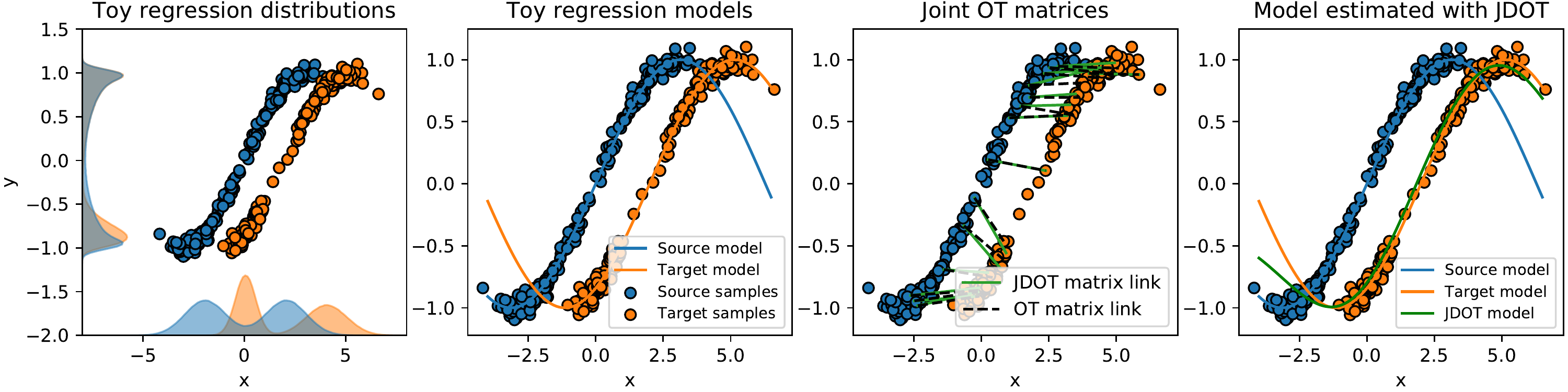}\vspace{-5mm}
\caption{Illustration of JDOT on a 1D regression problem. (left) Source and target empirical distributions and marginals (middle left) Source and target models (middle right) OT matrix on empirical joint distributions and with JDOT proxy joint distribution (right) estimated prediction function $f$. } 
  \label{fig:tloss_reg}\vspace{-3mm}
\end{figure} 

%\subsection{Discussion on the method}

\textbf{Choice of $\alpha$.} This is an important parameter balancing the alignment of feature space and labels. 
% As we want to avoid particular cases where only one of these two components of the transport plan 
%is privileged, a 
A natural choice of the $\alpha$ parameter is obtained by normalizing the range of values of $\dist{\xsi,\xtj}$ with $\alpha=1/\max_{i,j}  \dist{\xsi,\xtj}$. In the numerical experiment section, we show that
this setting is very good in two out of three experiments. However, in some cases, better performances are obtained with a cross-validation of this parameter. %, because it can be sensitive to outliers. %Other statistical criterion can be used though to alleviate this issue ({\em e.g.} using the mean or median). 
Also note that $\alpha$ is strongly linked to the smoothness of the  loss $\mathcal{L}$ and of the optimal labelling functions and can be seen as a Lipschitz constant in the bound of Theorem \ref{thm:bound}.

\textbf{Relation to other optimal transport based DA methods.} Previous DA methods based on optimal transport~\cite{courty2016,perrot2016} do not not only differ by 
the nature of the considered distributions, but also in the way the optimal plan is used to find $f$. They learn a complex mapping  between the source and target distributions when the objective is only to estimate a prediction function $f$ on target. 
To do so, they rely on a barycentric mapping that minimizes only approximately the Wasserstein distance between the distributions. As discussed in Section~\ref{sec:algo}, JDOT uses the optimal plan to propagate and fuse the labels from the source to target. Not only are the performances enhanced, but we also show how this approach is more theoretically well grounded in next section \ref{sec:theory}.

%which is learned on an estimated transported version of the source samples based on a barycentric mapping. 
%Our formulation does not require such a transformation, which minimizes only approximately the Wasserstein distance between the distributions.
%\rf{Note that those methods need to estimate a complex mapping between the source and target distribution when the objective is only to estimate a prediction function $f$ on target.
%As illustrated in Section~\ref{sec:algo}, JDOT only uses the optimal plan to propagate and fuse the labels from the source to target. Not only are the performances are enhanced, but we also show how this approach is more theoretically well grounded in next section \ref{sec:theory}. }

\textbf{Relation to Transport $L^p$ distances.} Recently, Thorpe and co-authors introduced the Transportation $\text{L}^p$ distance~\cite{Thorpe16}. Their objective is to compute a meaningful distance between multi-dimensional signals. Interestingly their distance can be seen as optimal transport between two distributions of the form \eqref{eq:proxy} where the functions are known  and the label loss $\mathcal{L}$ is chosen as a $L^p$ distance. While their approach is inspirational, JDOT is different both in its formulation, where we introduce a more general class of loss $\mathcal{L}$, and in its objective, as our goal is to estimate the target function $f$ which is not known \emph{a priori}. Finally we show theoretically and empirically that our formulation addresses successfully the problem of domain adaptation. 

%since one of the distribution used in the problem is a true joint distribution. Since their distributions do not depend on $Y$ their OT problem is computed in practice only between the marginal distribution $\mu_t$ and $\mu_s$.
%We also address more general class of loss $\mathcal{L}$ and estimate the target function $f$ that is known \emph{a priori} in their works. Finally we show theoretically and empirically that this loss addresses successfully the problem of domain adaptation. 
%Their cost function is therefore quite similar to ours but has mainly been investigated for computing meaningful distances between fixed and known functions.  In our work we consider general losses in the cost function and present an algorithm to learn classifiers as presented in Section~\ref{sec:algo}. We also show theoretically and empirically that this loss addresses successfully the problem of domain adaptation. 
%}

%%% Local Variables: 
%%% TeX-master: "tloss"
%%% End: 

\section{A Bound on the Target Error}
\label{sec:theory}

% !TEX root = jdot.tex

Let $f$ be an hypothesis function from a given class of hypothesis $\mathcal{H}$. We define the expected loss in the target domain $err_T(f)$ as
%\begin{equation*}
$err_T(f)\equaldef  \E_{(\x,y)\sim\pt} \loss(y,f(\x))$.
%\end{equation*}
We define similarly $err_S(f)$ for the source domain. 
%Now let $f^*_T=\argmin_f err_T(f)$ and $f^*_S=\argmin_f err_S(f)$ be the best labeling functions in source and target domains.
%We address the case where $f_S^* \neq f_T^*$, even though it has been highlighted in~\cite{Mansour09} that this difference needs to be small in 
%practice for the adaptation to succeed. 
We assume the loss function  $\loss$ to be bounded, symmetric, $k$-lipschitz and satisfying the triangle inequality. 

%The objective of domain adaptation model is to optimize the target loss:
%\begin{eqnarray}
%e_T(h) &\equaldef &e_T(h,f_T^*)\equaldef  \E_{\x\sim\mu_t} \loss(h(\x),f_T^*(\x)),\\
% & = & \int_{\Omega} \loss(h(\x),f_T^*(\x)) \probas{\X=\x} dx.
%\end{eqnarray}

To provide some guarantees on our method, we consider an adaptation of the notion probabilistic Lipschitzness introduced in \cite{UrnerSB11,Ben-DavidSU12} which assumes that two close instances must have the same labels with high probability. It corresponds to a relaxation of the classic Lipschitzness allowing one to model the marginal-label relatedness such as in Nearest-Neighbor classification, linear classification or cluster assumption.  We propose an extension of this notion in a domain adaptation context by assuming that a labeling function must comply with two close instances of each domain w.r.t. a coupling $\Pi$.% (\nc{in the sense of the coupling ?}).
\begin{definition} ({\bf Probabilistic Transfer Lipschitzness})
Let $\mu_s$ and $\mu_t$ be respectively the source and target distributions. Let  $\phi:\mathbb{R}\rightarrow
[0,1]$.  A labeling function $f:\Omega\rightarrow \mathbb{R}$ and a joint distribution $\Pi(\mu_s,\mu_t)$ over  $\mu_s$ and $\mu_t$ are
$\phi$-Lipschitz transferable  if
for all $\lambda>0$: 
\begin{equation*}
Pr_{(\x_1,\x_2)\sim \Pi(\mu_s,\mu_t)}\left[ |f(\x_1)-f(\x_2)|>{\lambda} d(\x_1,\x_2) 
\right]\leq \phi(\lambda).
\end{equation*}
\end{definition}
%Intuitively, given a deterministic labeling functions $f$, it implies that the weight of points (w.r.t. to joint distribution) that have a positive mass of points of opposite label  in a $(1/\lambda)$-ball around them, is bounded by $\phi(\lambda)$.
Intuitively, given a deterministic labeling functions $f$ and a coupling $\Pi$, it bounds the probability of finding pairs of source-target instances labelled differently in a $(1/\lambda)$-ball  with respect to $\Pi$.

%We now consider the following transport loss function:
%\begin{eqnarray}
%W_1(\ps,\pt^f)=\!\!&\!\!\displaystyle \inf_{\Pi\in\Pi(\ps,\pt^f)}\int_{(\Omega\times  \mathcal{C})^2} \alpha d(\x_s,\x_t)+\loss(y_s,y_t)%\nonumber\\
%&
%d\Pi(\x_s,y_s;\x_t,y_t).\!\!\!\!
%\end{eqnarray}
%This corresponds to the true loss function we propose to optimize since $\pt^f$ implies that $y_t=f(\x_t)$. Also note that replacing the distributions by their empirical counterpart leads exactly to the loss expressed in Eq. \eqref{eq:tloss2}.
%\ar{in the above equation, it is not clear that we minimize wrt to $f$ as in equation 5} \ah{En fait c'est normal, ici on ne minimise pas explicitement: le resultat est vrai pour tout $f$ et donc en particulier pour celui qui minimise la loss empirique. ca me va de dire que l'on retrouve la loss de \eqref{eq:tloss2}, et que dans la partie algo/analyse on dise que ca corresponde √† la minimisation d'une borne}.

%We also consider the following target transport loss:
%\begin{eqnarray}
%W_{h}=&\displaystyle \inf_{\Pi\in\Pi(\mu_s,\mu_t)}\int_{\Omega\times\Omega}\loss(h(\x1),f_T(\x_2))+\nonumber\\
%&\loss(f_T(\x_2),               f_S(\x_2))+d(\x_1,\x_2)d\Pi(\x_1,\x_2)
%\end{eqnarray}
%This quantity tries to align the two distributions such that $h$ complies with the true target function $f_T^*$ and such that the true labeling functions are not too different for two aligned instances $\x1$ and $\x2$. This theoretical quantity cannot be computed because it involves the true target function but can be interpreted as a theoretical measure for allowing adaptation.

We can now give our main result (simplified version):
\begin{theorem} \label{thm:bound}
Let $f$ be any labeling function of $\in\mathcal{H}$. Let $\Pi^*=\argmin_{\Pi\in\Pi(\ps,\pt^f)}\int_{(\Omega\times  \mathcal{C})^2} \alpha d(\x_s,\x_t)+\loss(y_s,y_t)d\Pi(\x_s,y_s;\x_t,y_t)$ and $W_1(\hat{\ps},\hat{\pt^f})$ the associated 1-Wasserstein distance. Let $f^*\in\mathcal{H}$  be a Lipschitz labeling function that verifies the $\phi$-probabilistic transfer Lipschitzness (PTL) assumption w.r.t. $\Pi^*$ and that minimizes the joint error $err_S(f^*)+err_T(f^*)$ w.r.t all PTL functions compatible with $\Pi^*$. We assume the input instances are bounded s.t. $|f^*(\x_1)-f^*(\x_2)|\leq M$ for all $\x_1,\x_2$. Let $\loss$ be any symmetric loss function, $k$-Lipschitz and  satisfying the triangle inequality. 
Consider a sample of $N_s$ labeled source instances drawn from
$\ps$ and $N_t$ unlabeled instances drawn from $\mu_t$, and then  for
all $\lambda>0$, with $\alpha=k\lambda$, we have with probability at least $1-\delta$ that:
\begin{eqnarray*}
\err_T(f)\!\!&\!\!\leq\!\!&\!\! \small W_1(\hat{\ps},\hat{\pt^f})+\sqrt{\frac{2}{c'}\log(\frac{2}{\delta})}\left(\frac{1}{\sqrt{N_S}}+\frac{1}{\sqrt{N_T}}\right)%\nonumber\\
%&&
+err_S(f^*)+err_T(f^*)%\nonumber\\
%&&
+kM\phi(\lambda).
\end{eqnarray*}
\end{theorem}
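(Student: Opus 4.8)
The plan is to bound $\err_T(f)$ by inserting the auxiliary hypothesis $f^*$ and its optimal coupling $\Pi^*$ through a chain of triangle inequalities for $\loss$, and then to replace the population Wasserstein term by its empirical version via a concentration argument. First I would peel off $err_T(f^*)$: for $(\x,y)\sim\pt$ the triangle inequality gives $\loss(y,f(\x))\le\loss(y,f^*(\x))+\loss(f^*(\x),f(\x))$, hence $\err_T(f)\le err_T(f^*)+\E_{\x\sim\mu_t}\loss(f^*(\x),f(\x))$. Since the target-side marginal of $\Pi^*$ is exactly $\pt^f=(\x,f(\x))_{\x\sim\mu_t}$, the remaining expectation equals $\E_{(\x_s,y_s;\x_t,y_t)\sim\Pi^*}\loss(f^*(\x_t),y_t)$, and two further applications of the triangle inequality bound $\loss(f^*(\x_t),y_t)$ by $\loss(f^*(\x_t),f^*(\x_s))+\loss(f^*(\x_s),y_s)+\loss(y_s,y_t)$. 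Integrating against $\Pi^*$, the source marginal being $\ps$ and $\loss$ symmetric, the middle term averages to $err_S(f^*)$; and because $\Pi^*$ attains the transport cost $\alpha d+\loss$, the last term averages to $W_1(\ps,\pt^f)-\alpha\,\E_{\Pi^*}d(\x_s,\x_t)$.

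Next I would control the cross term $\E_{\Pi^*}\loss(f^*(\x_t),f^*(\x_s))$ with the probabilistic transfer Lipschitzness assumption. I would split the expectation on the event $E=\{\,|f^*(\x_s)-f^*(\x_t)|>\lambda\,d(\x_s,\x_t)\,\}$, which carries $\Pi^*$-mass at most $\phi(\lambda)$, and use that $k$-Lipschitzness of $\loss$ (together with $\loss(a,a)=0$) yields $\loss(f^*(\x_t),f^*(\x_s))\le k\,|f^*(\x_t)-f^*(\x_s)|$: off $E$ this is at most $k\lambda\,d(\x_s,\x_t)$, and on $E$ at most $kM$, so the term is $\le k\lambda\,\E_{\Pi^*}d(\x_s,\x_t)+kM\phi(\lambda)$. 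Choosing $\alpha=k\lambda$ makes the two $\alpha\,\E_{\Pi^*}d$ contributions cancel exactly, leaving $\err_T(f)\le W_1(\ps,\pt^f)+err_S(f^*)+err_T(f^*)+kM\phi(\lambda)$.

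Finally I would pass to the empirical measures. Since $\alpha d+\loss$ is a metric on the bounded set $\Omega\times\mathcal{C}$, $W_1$ obeys the triangle inequality, so $W_1(\ps,\pt^f)\le W_1(\ps,\hat\ps)+W_1(\hat\ps,\hat{\pt^f})+W_1(\hat{\pt^f},\pt^f)$. As the source sample is i.i.d.\ from $\ps$ and the target sample i.i.d.\ from $\mu_t$ (so that $(\xti,f(\xti))$ is i.i.d.\ from $\pt^f$), a concentration inequality for the Wasserstein distance between an empirical measure and its population on a bounded domain — with $c'$ the constant of the associated transportation inequality — bounds $W_1(\ps,\hat\ps)$ by a term of the form $\sqrt{\frac{2}{c'}\log\frac{2}{\delta}}\,\big/\sqrt{N_s}$ with probability $1-\delta/2$, and likewise bounds $W_1(\hat{\pt^f},\pt^f)$ with the $N_t$ rate; a union bound then gives the stated inequality with probability $1-\delta$.

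I expect the main obstacle to be the bookkeeping of the first two steps: one has to arrange the triangle inequalities so that the surplus $\alpha\,\E_{\Pi^*}d$ produced by probabilistic transfer Lipschitzness is precisely the $\alpha\,\E_{\Pi^*}d$ already contained in $W_1(\ps,\pt^f)$, which is exactly what pins down the calibration $\alpha=k\lambda$; getting the intermediary $f^*$ and the coupling $\Pi^*$ to be compatible (PTL is stated against $\Pi^*$, whose target-$X$ marginal is $\mu_t$) is the delicate point. A more minor subtlety is the concentration step: on a $d$-dimensional domain with $d>2$ the expected empirical Wasserstein distance decays only like $N^{-1/d}$, so the clean $N^{-1/2}$ rate here is really the simplified form, with the dimension dependence absorbed into $c'$ (the unsimplified statement would carry an extra $N^{-1/d}$ term); in the write-up I would invoke the concentration bound as a black box and keep track only of $c'$.
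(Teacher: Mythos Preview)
Your proposal is correct and follows essentially the same route as the paper: insert $f^*$ via the triangle inequality for $\loss$, pass through the optimal coupling $\Pi^*$, control $\loss(f^*(\x_s),f^*(\x_t))$ by $k$-Lipschitzness of $\loss$ together with the PTL assumption (splitting on the event $|f^*(\x_s)-f^*(\x_t)|>\lambda\,d(\x_s,\x_t)$), and then replace $W_1(\ps,\pt^f)$ by its empirical version via the triangle inequality for $W_1$ and the Bolley--Guillin--Villani concentration bound.

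The only difference is organizational. The paper first isolates the quantity $|err_{T^f}(f^*)-err_S(f^*)|$ and bounds it through $\Pi^*$ by appealing to a Kantorovich--Rubinstein--type identity, then uses $|\loss(y_t,f^*(\x_s))-\loss(y_s,f^*(\x_s))|\le\loss(y_s,y_t)$ so that the $k\lambda\,d$ term is absorbed directly into the transport cost $\alpha d+\loss$. You instead expand $\E_{\Pi^*}\loss(f^*(\x_t),y_t)$ via the three-term chain $\loss(f^*(\x_t),f^*(\x_s))+\loss(f^*(\x_s),y_s)+\loss(y_s,y_t)$, read off $err_S(f^*)$ from the source marginal of $\Pi^*$, and display the cancellation of the two $\alpha\,\E_{\Pi^*}d(\x_s,\x_t)$ contributions explicitly. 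Your version is slightly more elementary (no duality needed, only the marginal constraints of $\Pi^*$) and makes the calibration $\alpha=k\lambda$ more transparent; the paper's version has the minor advantage that taking an absolute value makes the argument symmetric in source/target. Your caveat about the $N^{-1/2}$ rate is also well placed: the paper's appendix invokes Theorem~1.1 of Bolley--Guillin--Villani as a black box in exactly the way you describe, with the dimension dependence hidden in the constants $c'$ and $N_0$.
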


The detailed proof of Theorem \ref{thm:bound} is given in the supplementary material. The previous bound on the target error above is interesting to interpret. The first two terms correspond to the objective function \eqref{eq:tloss2} we propose to minimize accompanied with a sampling bound. The last term $\phi(\lambda)$ assesses the probability under which the probabilistic Lipschitzness does not hold. The remaining two terms involving $f^*$ correspond to the joint error minimizer illustrating that domain adaptation can work only if we can predict well in both domains, similarly to existing results in the literature \cite{Mansour09,Ben-DavidMLJ10}.  If the last  terms are small enough, adaptation is possible if we are able to align well $\ps$ and $\pt^f$, provided that $f^*$ and $\Pi^*$ verify the PTL.  Finally, note that $\alpha=k\lambda$ and tuning this parameter is thus actually related to finding the Lipschitz constants of the problem.

%\lipsum[1-2]

%%% Local Variables: 
%%% TeX-master: "tloss"
%%% End: 

\section{Learning with Joint Distribution OT}
\label{sec:algo}

% !TEX root = jdot.tex

In this section, we provide some details about  the JDOT's optimization problem given in Equation \eqref{eq:tloss2} and discuss algorithms for its resolution.
%\ar{test de notations}
We will assume that the function space $\mathcal{H}$ to which $f$ belongs
is either a RKHS or a function space parametrized by some parameters $\w \in \R^p$. 
This framework encompasses linear models, neural networks, and kernel methods.
Accordingly, we are going to define a regularization term $\Omega(f)$ on $f$.
Depending on how $\mathcal{H}$ is defined,  $\Omega(f)$ is either a non-decreasing function of the squared-norm induced by the RKHS (so that the representer theorem  is applicable) or a squared-norm  on the vector parameter.  We will further assume that $\Omega(f)$ is continuously differentiable.
As discussed above, $f$ is to be learned according to the following optimization problem
\begin{equation} \small
  \min_{f \in \mathcal{H},\G\in\Delta} \sum_{i,j} \G_{i,j} \left(\alpha d(\xsi,\x_j^t)+\mathcal{L}(y^s_i,f(\x_j^t))\right)+\lambda \Omega(f)
  \label{eq:learnprob}
\end{equation}
where the loss function $\mathcal{L}$ is continuous and differentiable with respects to its second variable.
Note that while the above problem does not involve any regularization term on the coupling matrix $\G$, it is essentially for the sake of simplicity and readability. Regularizers like entropic regularization \cite{CuturiSinkhorn}, which is
relevant when the number of samples is very large, can still be used without significant change to the algorithmic framework.
%\ar{end}

%In the remaining, we will assume that this function is parametrized
%by some weights $\w \in \R^d$ and denoted as $f_\w(\cdot)$. While this parametrization %is not exhaustive,
%it emcompasses most relevant models such as the linear model, neural networks, or
%kernel-based models. 
%
%
%As discussed above, $f_\w(\cdot)$ is to learned according to  
%the following optimization problem
%\begin{equation} 
%  \min_{\w \in \R^d,\G\in\Delta} \sum_{i,j} \G_{i,j} \left(\alpha %d(\xsi,\x_j^t)+\mathcal{L}(y^s_i,f_\w(\x_j^t))\right)+\lambda \Omega(\w)
%  \label{eq:learnprob}
%\end{equation}
%where $\alpha$ is a  weighting parameter. We also assume $\Omega(\w)$ is a continuous and differentiable regularization term and that the loss function $\mathcal{L}(\cdot,\cdot)$ is continuous and differentiable with respects to its second variable.
%Remark that optimization problem \ref{eq:learnprob} above does not involve any regularization term on the coupling matrix $\G$. This is essentially for a sake of simplicity and readability. Indeed, regularizers like entropic regularization \cite{CuturiSinkhorn} can still be used with not significant change to the problem.
%In the following we discuss the optimization algorithm we
%have considered for solving this problem. We develop in more details its instantiation to a regression and classification problems.

\textbf{Optimization procedure.}
%In the following, we first introduce the optimization algorithm we
%have considered for solving this problem. Then, we develop in more details its instantiation to a regression and classification problems.
According to the above hypotheses on $f$ and $\mathcal{L}$, Problem \eqref{eq:learnprob} is smooth and the constraints are separable
according to $f$ and $\G$. Hence, a natural way to solve the problem \eqref{eq:learnprob} is to rely on alternate optimization w.r.t. both parameters $\G$ and $f$. This algorithm well-known as Block Coordinate Descent (BCD) or Gauss-Seidel method (the pseudo code of the algorithm is given in appendix). %,  is illustrated in Algorithm \ref{algo:bcd}. 
Block optimization steps are discussed with further  details in the following. 

%\begin{algorithm}[t]
%\begin{algorithmic}
%\STATE{Initialize function $f^0$ and set $k=1$ }
%\STATE Set  $\alpha$ and $\lambda$ 
%\WHILE{not converged} 
%\STATE{$\G^k\leftarrow$ Solve OT problem \eqref{eq:learnprob} with fixed $f^{k-1}$} 
%\STATE{$f^k \leftarrow$ Solve learning problem \eqref{eq:learn_f} with fixed $\G^k$} 
%\STATE{$k \leftarrow k +1$}
%\ENDWHILE
%\end{algorithmic}
%\label{algo:bcd}
%\caption{Optimization with Block Coordinate Descent}
%\end{algorithm}

Solving with fixed $f$ boils down to %the following problem
%\begin{equation}
%  \min_{\G\in\Delta}  \quad \sum_{i,j} \G_{i,j} \left(\alpha d(\xsi,\x_j^t)+\mathcal{L}(y^s_i,f(\x_j^t))\right)
%  \label{eq:learn_T}
%\end{equation}
%which is 
a classical OT problem with a loss matrix $\C$ such that $C_{i,j}=\alpha d(\xsi,\x_j^t)+\mathcal{L}(y^s_i,f(\x_j^t))$. We can use classical OT solvers such as the network simplex algorithm, but 
other strategies can be considered, such as regularized OT \cite{CuturiSinkhorn} or stochastic versions~\cite{genevay2016}.

%For large datasets, this 
%problem can still be solved efficiently by adding an entropic regulariz OT can be solved with very efficient Bergman projection algorithms \cite{CuturiSinkhorn} or stochastic
%solvers~\cite{genevay2016}. 

%We want to empathize that since this step involves solving an OT problem it can also be performed with regularized OT \cite{CuturiSinkhorn}, and will benefit from any future computational development similar to those, for instance,
%proposed by \cite{genevay2016}. 

The optimization problem with fixed $\G$ leads to a new learning problem expressed as
\begin{equation}
  \min_{f \in\mathcal{H}} \quad \sum_{i,j} \G_{i,j} \mathcal{L}(y^s_i,f(\x_j^t))+\lambda \Omega(f)
  \label{eq:learn_f}
\end{equation}
Note how the data fitting term   elegantly and naturally encodes the transfer of  source labels $y^s_i$ through estimated labels of test
samples with a weighting depending on the optimal transport matrix.
%The particular cases for least square regression and hinge loss classification are discussed in the next sub-sections. 
However, this comes at the price of having a quadratic number $N_s N_t$ of terms, which can be considered as computationally expensive.
%first glance, as computationally expensive since it involves a quadratic number $N_s N_t$ of terms, 
%
We will see in the sequel that we can benefit from the structure of the chosen loss to greatly reduce its complexity. 
In addition, we emphasize that when $\mathcal{H}$ is a RKHS,  owing to 
kernel trick and the representer theorem, problem \eqref{eq:learn_f}
can be re-expressed as an optimization problem with $N_t$ number of parameters all belonging to $\R$.

Let us now discuss  briefly the convergence of the proposed algorithm. 
%Convergence to a local minimum is ensured with this algorithm when the loss function is hemivariate \cite{tseng2001convergence}, i.e. when there exists no line in its parameter space for which the function is constant. This is the case when the function is strictly convex w.r.t. $\T$ and $f$ but since we expressed the problem with no regularization on the OT matrix $\T$ it is not the case due to the linear problem. Still in practice we observed nice convergence. Also note that is theoretical convergence is needed we can use a regularized version of OT (such as \cite{CuturiSinkhorn}).
Owing to the $2$-block  coordinate descent structure, to the differentiability of the objective function in Problem  \eqref{eq:learnprob} and constraints on 
$f$ (or its kernel trick parameters) and $\G$ are closed, non-empty and convex, convergence result of Grippo et al. \cite{grippo2000convergence} on 2-block Gauss-Seidel methods
directly applies. It states that if the sequence $\{\G^k, f^k\}$ produced
by the algorithm has limit points then every limit point of the sequence
is a critical point of Problem \eqref{eq:learnprob}.

\textbf{Estimating $f$ for least square regression problems.}
We detail the use of JDOT for transfer least-square regression problem i.e when $\mathcal{L}$ is the squared-loss. In this context, when the optimal transport matrix $\G$ is fixed  the learning problem boils down to
%\begin{equation}
%  \min_{f\in\mathcal{H}}  \quad \sum_{i,j} \G_{i,j} \|y^s_i-f(\x_j^t)\|^2+\lambda \|f\|^2
%  \label{eq:learn_reg}
%\end{equation}
%it is easy to show using the sum constraints on $\G$  that the above problem is equivalent to
\begin{equation}
  \min_{f\in\mathcal{H}}  \quad \sum_{j} \frac{1}{n_t}\|\hat y_j-f(\x_j^t)\|^2+\lambda \|f\|^2
  \label{eq:learn_reg2}
\end{equation}
where the  $\hat y_j=n_t\sum_j \G_{i,j} y^s_i$ is a weighted average of the source target values. % that can be easily computed with the matrix operation $\hat\y=\G^\top\y^s$.
Note that this simplification results from the properties of the quadratic loss and that it may not occur for more complex regression
loss.

\textbf{Estimating $f$ for hinge loss classification problems.}
We now aim at estimating a multiclass classifier with a one-against-all strategy. We suppose that the data fitting is the binary squared hinge loss of the form $\mathcal{L}(y,f(\x))=\max(0,1-yf(\x))^2$. In a One-Against-All strategy we often use the binary matrices $\matP$ such that  $P^s_{i,k}=1$ if sample $i$ is of class $k$ else  $P^s_{i,k}=0$.
%and $\Y$ such that
%\begin{equation*}Y_{i,k}^s=\begin{cases} 1 &\mbox{if } y_i^s=k \\ 
%-1 & \mbox{else} \end{cases},\quad
% P^s_{i,k}=\begin{cases} 1 &\mbox{if } y_i^s=k \\ 
%0 & \mbox{else} \end{cases}
%\end{equation*}
%with  $k \in 1,\cdots, K$ and $K$ being the number of classes. 
%Note that in one against all we have access to two matrices encoding the source samples labels
%$\Y^s$  and $\matP^s$ such that 
%$$
%Y_{i,k}^s=\begin{cases} 1 &\mbox{if } y_i^s=k \\ 
%%-1 & \mbox{else} \end{cases},\quad P^s_{i,k}=\begin{cases} 1 &\mbox{if } y_i^s=k \\ 
%0 & \mbox{else} \end{cases}
%$$
Denote as $f_k \in \mathcal{H}$ the decision function related to the $k$-vs-all problem. The learning problem \eqref{eq:learn_f} can now  be expressed as
%\begin{equation}
%  \min_{f_k\in\mathcal{H}}  \quad \sum_{i,j,k} \G_{i,j} \mathcal{L}(Y^s_{i,k},f_k(\x_j^t))+\lambda \sum_k \|f_k\|^2
%  \label{eq:learn_classif}
%\end{equation}
%This data fitting term has a cubic  complexity of $O(N_s N_t K)$ which could make it intractable in practice. Nevertheless we can use the fact that the values in $\Y^s$ are binary to reformulate the problem as
%\begin{align*}
%  \min_{f_k\in\mathcal{H}}  \quad &\sum_{i,j,k,Y^s_{i,k}=1} \G_{i,j} \mathcal{L}(1, f_k(\x_j^t))%\\&+
%  +\sum_{i,j,k,Y^s_{i,k}=-1} \G_{i,j}\mathcal{L}(-1,f_k(\x_j^t))
%  +\lambda \sum_k \|f_k\|^2
%  %\label{eq:learn_classif2}
%\end{align*}
%in which the first two terms can be written as
%\begin{align*}
%${\small
%   \sum_{j,k} \hat P_{j,k} \mathcal{L}(1, f_k(\x_j^t))+ (1-\hat P_{j,k}) \mathcal{L}(-1,f_k(\x_j^t))}$
 % +\lambda \sum_k  \|f_k\|^2
 % \label{eq:learn_classif3}
%\end{align*}
\begin{equation}
  \min_{f_k\in\mathcal{H}}  \quad \sum_{j,k} \hat P_{j,k} \mathcal{L}(1, f_k(\x_j^t))+ (1-\hat P_{j,k}) \mathcal{L}(-1,f_k(\x_j^t))+\lambda \sum_k \|f_k\|^2
  \label{eq:learn_classif}
\end{equation}
where $\hat\matP$ is the transported class proportion matrix $\hat\matP=\frac{1}{N_t}\G^\top\matP^s$. Interestingly this formulation illustrates that for each target sample, the data fitting term is a convex sum of hinge loss for a negative and positive label with weights in $\G$.

\section{Numerical experiments}
\label{sec:expe}

% !TEX root = jdot.tex
 
In this section we evaluate the performance of our method ({\bf JDOT}) on two different transfer tasks of
classification and regression on real datasets~\footnote{Open Source Python implementation of JDOT: \url{https://github.com/rflamary/JDOT}}.

%\subsection{Classification}
%We considered two classification datasets for evaluating the performance of our method in a classification context.

\textbf{Caltech-Office classification dataset.} This dataset~\cite{saenko10} is dedicated to visual adaptation. It contains images 
from four different domains: {\em Amazon}, the {\em Caltech-256} image collection, {\em Webcam}  and {\em DSLR}. Several features, such 
as presence/absence of background, lightning conditions, image quality, etc.) induce a distribution shift between the domains, 
and it is therefore relevant to consider a domain adaptation task to perform the classification. Following~\cite{courty2016}, we choose deep learning 
features to represent the images, extracted as the weights of the fully connected 6th layer of the DECAF convolutional neural
network~\cite{donahue14}, pre-trained on ImageNet. The final feature vector is a sparse 4096 dimensional vector. 

\begin{table}[!t]
	\caption{Accuracy on the Caltech-Office Dataset. Best value in bold.}
	\begin{center}\small
	\label{tab:coffice}
		\begin{tabular}{cccccccc}
		\toprule
			{Domains} & {{\bf Base}} & {{\bf SurK}} & {{\bf SA}} & {{\bf ARTL}} & {{\bf OT-IT}} & {{\bf OT-MM}} & {{\bf JDOT}}\\
			\midrule
			caltech$\rightarrow$amazon & $92.07$ & $91.65$ & $90.50$ & $92.17$ & $89.98$ & $\bf 92.59$ & $91.54$\\
			caltech$\rightarrow$webcam & $76.27$ & $77.97$ & $81.02$ & $80.00$ & $80.34$ & $78.98$ & $\bf 88.81$\\
			caltech$\rightarrow$dslr   & $84.08$ & $82.80$ & $85.99$ & $88.54$ & $78.34$ & $76.43$ & $\bf 89.81$\\
			amazon$\rightarrow$caltech & $84.77$ & $84.95$ & $85.13$ & $85.04$ & $85.93$ & $\bf 87.36$ & $85.22$\\
			amazon$\rightarrow$webcam  & $79.32$ & $81.36$ & $\bf 85.42$ & $79.32$ & $74.24$ & $85.08$ & $84.75$\\
			amazon$\rightarrow$dslr    & $86.62$ & $87.26$ & $\bf 89.17$ & $85.99$ & $77.71$ & $79.62$ & $87.90$\\
			webcam$\rightarrow$caltech & $71.77$ & $71.86$ & $75.78$ & $72.75$ & $\bf 84.06$ & $82.99$ & $82.64$\\
			webcam$\rightarrow$amazon  & $79.44$ & $78.18$ & $81.42$ & $79.85$ & $89.56$ & $90.50$ & $\bf 90.71$\\
			webcam$\rightarrow$dslr    & $96.18$ & $95.54$ & $94.90$ & $\bf 100.00$ & $99.36$ & $99.36$ & $98.09$\\
			dslr$\rightarrow$caltech   & $77.03$ & $76.94$ & $81.75$ & $78.45$ & $\bf 85.57$ & $83.35$ & $84.33$\\
			dslr$\rightarrow$amazon    & $83.19$ & $82.15$ & $83.19$ & $83.82$ & $\bf 90.50$ & $\bf 90.50$ & $88.10$\\
			dslr$\rightarrow$webcam    & $96.27$ & $92.88$ & $88.47$ & $\bf 98.98$ & $96.61$ & $96.61$ & $96.61$\\\hline
			\bf Mean  & $83.92$ & $83.63$ & $85.23$ & $85.41$ & $86.02$ & $86.95$ & $\bf 89.04$\\
			\bf Mean rank & $5.33$ & $5.58$ & $4.00$ & $3.75$ & $3.50$ & $2.83$ & $2.50$\\
			\bf p-value & $<0.01$ & $<0.01$ & $0.01$ & $0.04$ & $0.25$ & $0.86$ & $-$\\
		\bottomrule
		\end{tabular}
	\end{center}
\end{table}

We compare our method with four other methods: the surrogate kernel approach (\cite{Zhang13}, denoted {\bf SurK}), subspace 
adaptation for its simplicity and good performances on visual adaptation (\cite{Fernando13}, {\bf SA}), Adaptation Regularization based Transfer Learning (\cite{LongTKDE14}, {\bf ARTL}), and the two variants of regularized 
optimal transport~\cite{courty2016}: entropy-regularized {\bf OT-IT} and classwise regularization implemented with the Majoration-Minimization 
algorithm~{\bf OT-MM}, that showed to give better results in practice than its group-lasso counterpart. The classification is conducted with a SVM 
together with a linear kernel for every method. Its results when learned on the source domain and tested on the target domain are also reported to serve
as baseline ({\bf Base}). All the methods have hyper-parameters, that are selected using the reverse cross-validation of Zhong and colleagues~\cite{Zhong10}.%  (see appendix for details \nc{a voir si on le met effectivement}). 
The dimension d for {\bf SA} is chosen from $\{1,4,7,\hdots,31\}$. The entropy regularization for 
{\bf OT-IT} and {\bf OT-MM} is taken from $\{10^2,\hdots,10^5\}$, $10^2$ being the minimum value for the Sinkhorn algorithm to prevent numerical errors. 
Finally the $\eta$ parameter of {\bf OT-MM} is selected from $\{1,\hdots,10^5\}$ and the $\alpha$ in {\bf JDOT} from $\{10^{-5},10^{-4},\hdots,1\}$.

The classification accuracy for all the methods is reported in Table~\ref{tab:coffice}. We can see that {\bf JDOT} is consistently outperforming the baseline ($5$ points
in average), indicating that the adaptation is successful in every cases. Its mean accuracy is the best as well as its average ranking. We conducted a Wilcoxon signed-rank test to test if  {\bf JDOT} was statistically better than the other methods, and report the p-value in the tables. This test shows that  {\bf JDOT} is statistically better than the considered methods, except for OT based ones that where state of the art on this dataset \cite{courty2016}.  

\begin{table}[tp]
\caption{Accuracy on the Amazon review experiment. Maximum value in bold font.}
	\label{tab:amazon}\footnotesize\small\vspace{-2mm}
	\begin{center}%\resizebox{.8\columnwidth}{!}{
		\begin{tabular}{ccccc}
		\toprule
			{Domains} & {{\bf NN}} & {{\bf DANN}} & {{\bf JDOT} (mse)} & {{\bf JDOT} (Hinge)}\\
			\midrule
			books$\rightarrow$dvd               & $0.805$ & $\bf 0.806$ & $0.794$ & $0.795$\\
			books$\rightarrow$kitchen           & $0.768$ & $0.767$ & $0.791$ & $\bf 0.794$\\
			books$\rightarrow$electronics       & $0.746$ & $0.747$ & $0.778$ & $\bf 0.781$\\
			dvd$\rightarrow$books               & $0.725$ & $0.747$ & $0.761$ & $\bf 0.763$\\
			dvd$\rightarrow$kitchen             & $0.760$ & $0.765$ & $0.811$ & $\bf 0.821$\\
			dvd$\rightarrow$electronics         & $0.732$ & $0.738$ & $0.778$ & $\bf 0.788$\\
			kitchen$\rightarrow$books           & $0.704$ & $0.718$ & $\bf 0.732$ & $0.728$\\
			kitchen$\rightarrow$dvd             & $0.723$ & $0.730$ & $0.764$ & $\bf 0.765$\\
			kitchen$\rightarrow$electronics     & $\bf 0.847$ & $0.846$ & $0.844$ & $0.845$\\
			electronics$\rightarrow$books       & $0.713$ & $0.718$ & $0.740$ & $\bf 0.749$\\
			electronics$\rightarrow$dvd         & $0.726$ & $0.726$ & $\bf 0.738$ & $0.737$\\
			electronics$\rightarrow$kitchen     & $0.855$ & $0.850$ & $0.868$ & $\bf 0.872$\\
			\midrule
			\bf Mean  & $0.759$ & $0.763$ & $0.783$ & $\bf 0.787$\\
			\bf p-value  & $0.004$ & $0.006$ & $0.025$ & $-$\\
		\bottomrule
		\end{tabular}%}
	\end{center}\vspace{-5mm}
\end{table}

\textbf{Amazon review classification dataset} We now consider the {\em Amazon review dataset}~\cite{blitzer2006} which contains online reviews
of different products collected on the Amazon website. Reviews are encoded with bag-of-word unigram and bigram features as input.
The problem is to predict positive (higher than 3 stars) or negative (3 stars or less) notation of reviews (binary classification). 
Since different words are employed to qualify the different categories of products, a domain adaptation task can be formulated  if one 
wants to predict positive reviews of a product from labelled reviews of a different product. Following~\cite{chen2012,ganin2016}, we consider 
only a subset of four different types of product: books, DVDs, electronics and kitchens. This yields 12 possible adaptation tasks. 
Each domain contains 2000 labelled samples and approximately 4000 unlabelled ones. We therefore use these unlabelled samples to 
perform the transfer, and test on the 2000 labelled data.     

The goal of this experiment is to compare to the state-of-the-art method on this subset, namely Domain adversarial neural network (\cite{ganin2016},
denoted {\bf DANN}), and to show the versatility of our method that can adapt to any type of classifier. The neural network used for all methods in this experiment is 
a simple 2-layer model with sigmoid activation function in the hidden layer to promote non-linearity. $50$ neurons are used in this hidden layer.  For 
{\bf DANN}, hyper-parameters are set through the reverse cross-validation proposed in~\cite{ganin2016}, and following the recommendation of authors the 
learning rate is set to $10^{-3}$. In the case of {\bf JDOT}, we used the heuristic setting of $\alpha=1/\max_{i,j}  \dist{\xsi,\xtj}$, and as such we do not need any cross-validation. 
The  squared Euclidean norm is used for both metric in feature space and we test as loss functions both mean squared errors (mse) and Hinge losses. $10$ iterations of the block coordinate descent are realized. For each method, we stop the learning process of the network after $5$ epochs.   
Classification accuracies are presented in table~\ref{tab:amazon}.  The neural network ({\bf NN}), trained on source and tested on target, is also presented 
as a baseline.  {\bf JDOT} surpasses {\bf DANN} in 11 out of 12 tasks (except on books$\rightarrow$dvd). The Hinge loss is better in than mse in 10 out of 12 cases, 
which is expected given the superiority of the Hinge loss on classification tasks~\cite{rosasco2004}.

\textbf{Wifi localization regression dataset}
\begin{table*}[!t]
  \caption{Comparison of different methods on the Wifi localization dataset. Maximum value in bold.}\vspace{-2mm}
  \label{tab:wifi}
  \begin{center}
 % \label{tab:table}
    \resizebox{\textwidth}{!}{\begin{tabular}{ccccccccc}
    \toprule
      {Domains} & {{\bf KRR}} & {{\bf SurK}} & {{\bf DIP}} & {{\bf DIP-CC}} & {{\bf GeTarS}} & {{\bf CTC}} & {{\bf CTC-TIP}} & {{\bf JDOT}}\\
      \midrule 
      t1 $\rightarrow$ t2 &  80.84$\pm$1.14 &  90.36$\pm$1.22 & 87.98$\pm$2.33 & 91.30$\pm$3.24 & 86.76 $\pm$ 1.91 & 89.36$\pm$1.78 & 89.22$\pm$1.66& {\bf 93.03 $\pm$ 1.24}\\
      t1 $\rightarrow$ t3 &  76.44$\pm$2.66 & {\bf 94.97$\pm$1.29 }& 84.20$\pm$4.29 & 84.32$\pm$4.57 & 90.62$\pm$2.25 & 94.80$\pm$0.87 & 92.60 $\pm$ 4.50& 90.06 $\pm$ 2.01\\
      t2 $\rightarrow$ t3 &  67.12$\pm$1.28 & 85.83 $\pm$ 1.31 & 80.58 $\pm$ 2.10 & 81.22 $\pm$ 4.31 & 82.68 $\pm$ 3.71 & 87.92 $\pm$ 1.87 & {\bf 89.52 $\pm$ 1.14}& 86.76 $\pm$ 1.72\\
      \midrule
      hallway1 & 60.02  $\pm$2.60 & 76.36 $\pm$ 2.44 & 77.48 $\pm$ 2.68 & 76.24$\pm$  5.14& 84.38 $\pm$  1.98& 86.98 $\pm$ 2.02& 86.78 $\pm$ 2.31& {\bf 98.83$\pm$0.58}\\
      hallway2 & 49.38 $\pm$ 2.30 & 64.69  $\pm$0.77 & 78.54 $\pm$ 1.66 &77.8$\pm$  2.70 &   77.38 $\pm$ 2.09 &87.74 $\pm$ 1.89 &87.94 $\pm$ 2.07& {\bf 98.45$\pm$0.67}\\
      hallway3 & 48.42  $\pm$1.32 &  65.73 $\pm$ 1.57 & 75.10$\pm$  3.39 &73.40$\pm$  4.06& 80.64 $\pm$ 1.76 &82.02$\pm$  2.34 &81.72 $\pm$ 2.25& {\bf 99.27$\pm$0.41}\\
    \bottomrule
    \end{tabular}}
  \end{center}\vspace{-5mm}
\end{table*}
For the regression task, we use the cross-domain indoor Wifi localization dataset that was proposed by Zhang and co-authors~\cite{Zhang13},
and recently studied in~\cite{Gong16}. From a multi-dimensional signal (collection of signal strength perceived from several access points), the goal 
is to locate the device in a hallway, discretized into a grid of $119$ squares, by learning a mapping from the signal to the grid element. This 
translates as a regression problem. As the signals were acquired at different time periods by different devices, a shift can be encountered and 
calls for an adaptation. In the remaining, we follow the exact same experimental protocol  as in~\cite{Zhang13,Gong16} for ease of comparison. 
Two cases of adaptation are considered: {\bf transfer across periods}, for which three time periods t1, t2 and t3 are considered, and {\bf transfer 
across devices}, where three different devices are used to collect the signals in the same straight-line hallways  (hallway1-3), leading to three different adaptation 
tasks in both cases. 

We compare the result of our method with several state-of-the-art methods: kernel ridge regression with RBF kernel ({\bf KRR}), surrogate kernel (\cite{Zhang13}, denoted 
{\bf SurK}), domain-invariant projection and its cluster regularized version (\cite{Baktashmotlagh13}, denoted respectively {\bf DIP} and {\bf DIP-CC}), 
generalized target shift (\cite{Zhang2015}, denoted {\bf GeTarS}), and conditional transferable components, with its target information preservation regularization
(\cite{Gong16}, denoted respectively {\bf CTC} and {\bf CTC-TIP}). As in~\cite{Zhang13,Gong16}, the hyper-parameters of the competing methods are 
cross-validated on a small subset of the target domain. In the case of {\bf JDOT}, we simply set the $\alpha$ to the heuristic value of $\alpha=1/\max_{i,j}  \dist{\xsi,\xtj}$ as discussed previously, and $f$ is estimated with kernel ridge regression.  

Following~\cite{Zhang13}, the accuracy is measured in the following way: the prediction is said to be correct if it falls within a range of three meters in the transfer 
across periods, and six meters in the transfer across devices. For each experiment, we randomly sample sixty percent of the source and target domain, and report 
the mean and standard deviation of ten repetitions accuracies in Table~\ref{tab:wifi}. For transfer across periods, {\bf JDOT} performs best in one out of three
tasks. For transfer across devices, the superiority of  {\bf JDOT} is clearly assessed, for it reaches an average score $> 98\%$, which is at least ten points ahead of the best competing
method for every task. Those extremely good results could be explained by the fact that using optimal transport allows to consider large shifts of distribution,
for which divergences (such as maximum mean discrepancy used in {\bf CTC}) or reweighting strategies can not cope with.

%\vspace{-3mm}

%%% Local Variables: 
%%% TeX-master: "tloss"
%%% End: 

\section{Discussion and conclusion}
\label{sec:conc}
We have presented in this paper the Joint Distribution Optimal Transport for domain adaptation, which is a principled way of performing domain adaptation with optimal transport. JDOT assumes the existence of a transfer map that transforms a source domain joint distribution $\ps(X,Y)$ into a target domain equivalent version $\pt(X,Y)$. Through this transformation, the alignment of both feature space and conditional distributions is operated, allowing to devise an efficient algorithm that simultaneously optimizes for a coupling between $\ps$ and $\pt$ and a prediction function that solves the transfer problem.
We also proved that learning with {\bf JDOT} is equivalent to minimizing a  bound on the target distribution.
We have demonstrated through experiments on classical real-world benchmark datasets the superiority of our approach w.r.t. several state-of-the-art methods, including previous work on optimal transport based domain adaptation, domain adversarial neural networks or transfer components, on a variety of task including classification and regression. We have also showed the versatility of our method, that can accommodate with several types of loss functions (mse, hinge) or class of hypothesis (including kernel machines or neural networks). 
Potential follow-ups of this work include a semi-supervised extension (using unlabelled examples in source domain)  and investigating stochastic techniques for solving efficiently the adaptation.
From a theoretical standpoint, future works include a deeper study of  probabilistic transfer lipschitzness and the development of guarantees able to take into the complexity of the hypothesis class and the space of possible transport plans.

%are numerous: 
%First it is easy to see that we can adapt the method to the semi supervised source domain by learning a function $f_S$ on the labels source samples and aligning $\ps^{f_s}$ and $\pt^f$. Scaling the methods for large datasets is also of prime interest. To this end we will investigate stochastic versions, especially for neural networks and batch learning strategies. 

%Finally an interesting development is the extension to multi-domain adaptation where the target function could be learned simultaneously for the all the domain-wise transfers. 

%First when unlabelled samples are also available in the source domain, we can estimate a labeling function $\hat{f_s}:\Omega\rightarrow\mathcal{C}$
%in the source domain and use its prediction as labels. 
%Next, while the existence of a transfer map assumption encompasses naturally a large class of transformation, it is still not clear if one can directly estimate from the data if this assumption holds or not. The conditions of a successful adaptation might be derived directly \nc{pas sur}
%Second, it is of prime interest to be able to scale up to larger datasets. Though the original complexity of OT problems prevent a direct application for very large domains (n$>$10000), stochastic versions can be considered, especially for neural networks and batch learning strategies. We plan to extensively test our method  on such architectures. Finally, transport loss can be also considered in a multi-domain adaptation setting, where the target function could be learned simultaneously for the all the domain-wise transfers. 

%\clearpage
\section*{Acknowledgements}
This work benefited from the support of the project OATMIL ANR-17-CE23-0012 of the French National Research Agency (ANR), the Normandie Projet GRR-DAISI, European funding FEDER DAISI and CNRS funding from the D\'{e}fi Imag'In. The authors also wish to thank Kai Zhang and Qiaojun Wang for providing the Wifi localization dataset.

\bibliographystyle{unsrt}
\begin{small}
\bibliography{biblio,bibliosup}
\end{small}

\newpage
\appendix{Appendix Section}

\section{Illustration on a simple example}

We illustrate the behavior of our method on a  3-class toy example (Figure~\ref{fig:illus}).  We consider a  classification problem using the hinge loss and $\mathcal{H}$ is a Reproducing Kernel Hilbert Space. Source domain samples are drawn from three different 2D Gaussian distributions with with different centers and standard deviations. The target domain is obtained rotating the source distribution by $\pi/4$ radian. Two types of kernel are considered: linear and RBF.   In Figure~\ref{fig:illus}.a, one can observe on the first column of images 
that using directly a classifier learned on the source domain leads to bad performances because of the rotation. We then show the iterations of the block coordinate descent which 
allows one to recover the true labels of the target domain. It is also interesting to examine the impact of the $\alpha$ parameter on the success of the method. In Figure~\ref{fig:illus}.b, we show the evolution of classification 
accuracy for six different $\alpha$ in the case of RBF kernel. %As explained previously, the $\alpha$ parameter allows to put more emphasis on the cost of the transport related to the distance in the input space (large $\alpha$) or to the agreement of learned classifier with the source labels. 
 Relying mostly on the label cost ($\alpha=\{0.1\}$) leads to a deterioration of the final accuracy. Using only the input space distance ($\alpha=\{50, 100\}$), which is equivalent to \cite{courty2016}, allows a performance gain. But it is clear that using both losses with  $\alpha=\{0.5, 1, 10\}$ leads to the best performance. Also note the small number of iterations required ($<10$) for achieving a steady state.
 
 \begin{figure*}[!hb]
 \centerline{
 \includegraphics[width=0.66\linewidth]{./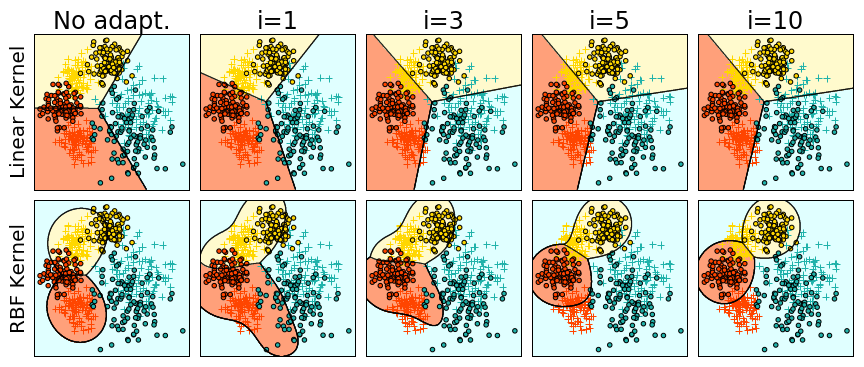}a
\includegraphics[width=0.33\linewidth]{./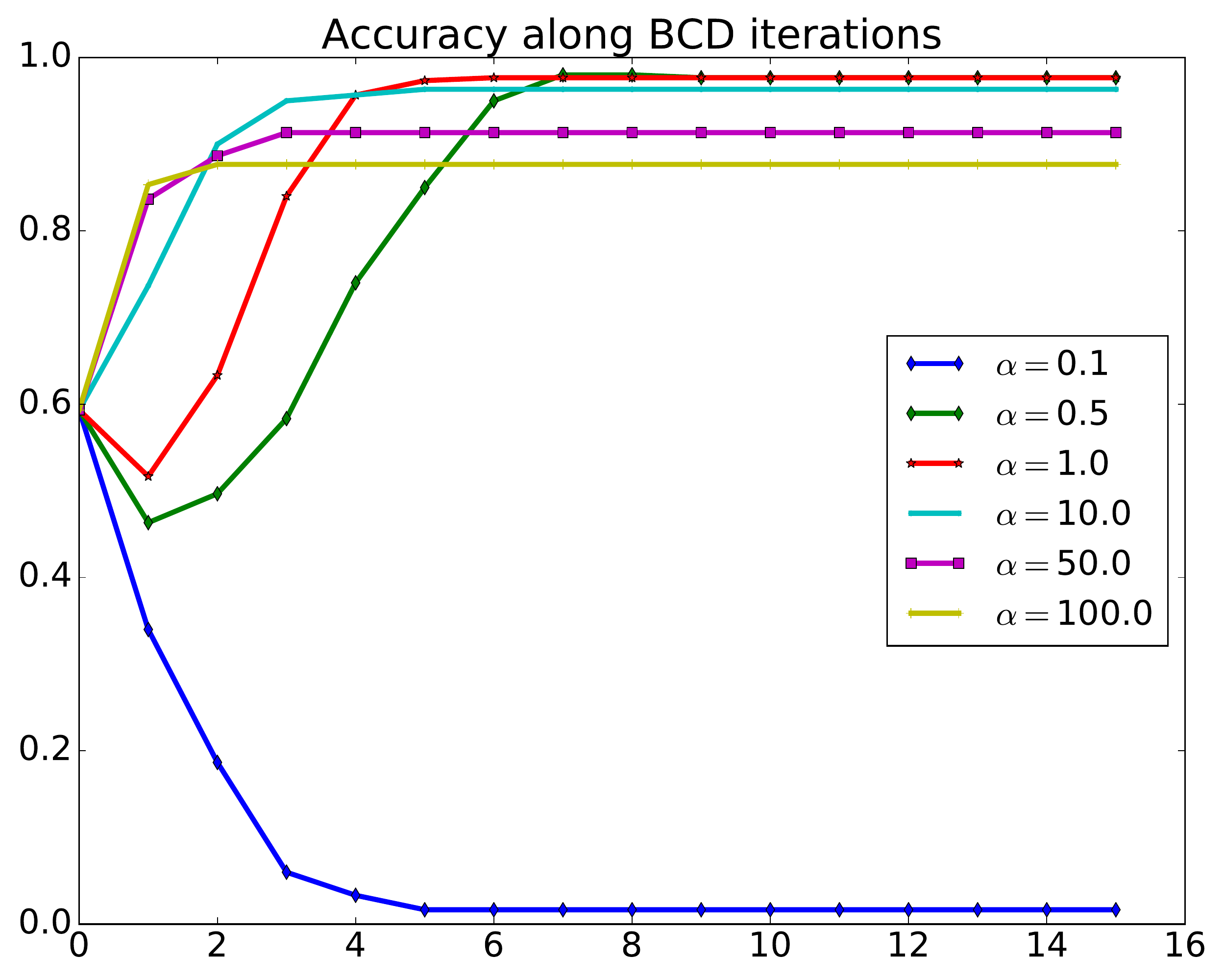}b
}\vspace{-3mm}
\caption{\textbf{Illustration on a toy example}.  (a): Decision boundaries for linear and RBF kernels on selected iterations. The source domain is depicted with crosses, while the target domain samples are class-colored circles. (b): Evolution of the accuracy along 15 iterations of the method for different values of the $\alpha$ parameter;} 
  \label{fig:illus}
\end{figure*}

\section{Block coordinate descent algorithm for solving JDOT}
We give in~algorithm~1 an overview of the block coordinate descent algorithm used for solving JDOT.

\begin{algorithm}
\begin{algorithmic}
\STATE{Initialize function $f^0$ and set $k=1$ }
\STATE Set  $\alpha$ and $\lambda$ 
\WHILE{not converged} 
\STATE{$\G^k\leftarrow$ Solve OT problem (3 in paper) with fixed $f^{k-1}$} 
\STATE{$f^k \leftarrow$ Solve learning problem (7 in paper) with fixed $\G^k$} 
\STATE{$k \leftarrow k +1$}
\ENDWHILE
\end{algorithmic}
\label{algo:bcd}
\caption{Optimization with Block Coordinate Descent}
\end{algorithm}

\section{BCD iterations on real data}

\begin{table}
\centering
\begin{tabular}{|c||c|c|c|}\hline
Iter & caltech $\rightarrow$ amazon & dslr $\rightarrow$   amazon & webcam $\rightarrow$ caltech \\\hline
0 &  89.14 & 80.9 & 75.6 \\
1 &  91.75 & 86.22 & 80.23 \\
2 &  91.44 & 86.95 & 81.75 \\
3 &  91.54 & 87.68 & 82.64 \\
4 &  91.44 & 87.68 & 83.26 \\
5 &  91.65 & 88.0 & 83.35 \\
6 &  91.86 & 88.1 & 83.17 \\
7 &  92.17 & 87.89 & 83.08 \\
8 &  92.28 & 87.58 & 83.26 \\
9 &  92.28 & 87.58 & 83.26 \\
10 &  92.28 & 87.68 & 83.35 \\
11 &  92.28 & 87.79 & 83.44 \\
12 &  92.28 & 87.79 & 83.44 \\
13 &  92.28 & 87.79 & 83.44 \\
14 &  92.28 & 87.79 & 83.44 \\\hline
%caltech $\rightarrow$ amazon &
%89.14& 91.75 &91.44& 91.54& 91.44& 91.65& 91.86&92.17& 92.28 &92.28 &92.28 &92.28 &92.28 &92.28 &92.28\\
%dslr -> amazon
%80.9 86.22 86.95 87.68 87.68 88.0 88.1 87.89 87.58 87.58 87.68 87.79 87.79 87.79 87.79
%webcam -> caltech
%75.6 80.23 81.75 82.64 83.26 83.35 83.17 83.08 83.26 83.26 83.35 83.44 83.44 83.44 83.44

\end{tabular}
\caption{Accuracy of the estimated model along BCD iterations on Caltech-office dataset}
\label{tab:bcdtable}
\end{table}

We report in Table \ref{tab:bcdtable}, for a fixed set of parameter (no CV), the evolution of the empirical error along the iterations of the 15 first iterations of the BCD on a real dataset. We can see that generally the result stabilizes at around 10 iterations. We can also observe that the increase in performance is not monotonic, contrary to the toy example.

\section{Proof of Theorem 3.1}
We first recall some hypothesis used for this theorem.\\ 

 $\mathcal{H}\subset \mathcal{C}^\Omega$ is the hypothesis class.
$\loss:\mathcal{C}\times\mathcal{C}\rightarrow \mathbb{R}^+$ is the loss function measuring the discrepancy between two labels. This loss is assumed  to be symmetric, bounded and $k$-lipschitz in its second argument, {\it i.e.} there exists $k$ such that for any $y_1, y_2, y_3 \in \mathcal{C}$:\vspace*{-.08cm}
$$
|\loss(y_1,y_2)-\loss(y_1,y_3)|\leq k |y_2-y_3|.\vspace*{-.08cm}
$$

$\pt$ and $\ps$ are respectively the target and source distributions over $\Omega\times\mathcal{C}$, with $\mu_t$ and $\mu_s$  the respective marginals over $\Omega$.
The expected loss in the target domain $err_T(f)$ is defined for any\vspace*{-.08cm} $f\in\mathcal{H}$ as
\begin{equation*}
err_T(f)\equaldef  \E_{(\x,y)\sim\pt} \loss(y,f(\x)). \vspace*{-.08cm}
\end{equation*}
We can similarly define $err_S(f)$ in the source domain and the expected inter function loss $err_T(f,g)=\E_{(\x,y)\sim\pt} \loss(g(\x),f(\x))$.\\
%Now let $f^*_T=\argmin_f err_T(f)$ and $f^*=\argmin_f err_S(f)$ be the best labeling functions in source and target domains.

The proxy $\pt^f$ over $\Omega\times\mathcal{C}$ of $\pt$ w.r.t. to $\mu_t$ and $f$ is defined as:
$\pt^f=(\x,f(\x))_{\x\sim \mu_t}$.\\

We consider the following transport loss function:
\begin{eqnarray*}
W_1(\ps,\pt^f)=\!\!&\!\!\displaystyle \inf_{\Pi\in\Pi(\ps,\pt^f)}\int_{(\Omega\times  \mathcal{C})^2} \alpha d(\x_s,\x_t)+\loss(y_s,y_t)\d\Pi((\x_s,y_s),(\x_t,y_t)).\nonumber
\end{eqnarray*}

We now recall the definition of the theorem with all the assumptions.
\begin{theorem} \label{thm:bound}
Let $\mathcal{H}\subset\mathcal{C}^\Omega$ be the hypothesis class where $\Omega$ is a compact mesurable space of finite dimension accompanied with a metric $d$, and $\mathcal{C}$ is the output space. Let $f$ be any labeling function of $\in\mathcal{H}$. Let $\ps$,  $\pt$, $\pt^f$ be three probability distributions over $\Omega\times\mathcal{C}$ with bounded support, with  $\pt^f$ defined w.r.t. the marginal $\mu_t$ of $\pt$ and $f$,    accompanied with a sample of $N_s$ labeled source instances drawn from
$\ps$ and $N_t$ unlabeled instances drawn from $\mu_t$ and labeled by $f$, such that $\ps$ and $\pt^f$ and the associated samples follow the assumptions of Theorem \ref{th:W_1C}. Let $\Pi^*=\argmin_{\Pi\in\Pi(\ps,\pt^f)}\int_{(\Omega\times  \mathcal{C})^2} \alpha d(\x_s,\x_t)+\loss(y_s,y_t)\d\Pi((\x_s,y_s),(\x_t,y_t))$.
Let $f^*$ be a Lipschitz labeling function of $\mathcal{H}$, that verifies the $\phi$-probabilistic transfer Lipschitzness (PTL) assumption with respect to $\Pi^*$ and that minimizes the joint error $err_S(f^*)+err_T(f^*)$ w.r.t all compatible PTL functions with $\Pi^*$. We assume the instance space $\mathcal{X}\subseteq \Omega$ is bounded\footnote{Since the input space is bounded by say a constant $K$: $\|\x\|\leq K$, since $f^*$ is supposed $l$-Lipschitz, then we have for any $\x_1,\x_2$: $|f(\x_1)-f(\x_2)|\leq l\|\x_1-\x_2\|\leq 2lK=M$.} such that $|f^*(\x_1)-f^*(\x_2)|\leq M$ for all $\x_1,\x_2\in \mathcal{X}^2$.  Let $\loss$ be any loss function  symmetric, $k$-lipschitz and that satisfies the triangle inequality.
 Then, there exists, $c'$ and $N$, such that for $N_s>N$ and $N_t>N$, for 
all $\lambda>0$, with $\alpha=k\lambda$, we have with probability at least $1-\delta$:
\begin{eqnarray*}
\err_T(f)\!\!&\!\!\leq\!\!&\!\! \small W_1(\hat{\ps},\hat{\pt^f})+\sqrt{\frac{2}{c'}\log(\frac{2}{\delta})}\left(\frac{1}{\sqrt{N_s}}+\frac{1}{\sqrt{N_t}}\right)\nonumber\\
&&+err_S(f^*)+ err_T(f^*)+kM\phi(\lambda).
\end{eqnarray*}
\end{theorem}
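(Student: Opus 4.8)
The plan is to peel $\err_T(f)$ apart with repeated uses of the triangle inequality for $\loss$, using the optimal joint coupling $\Pi^*$ of $\ps$ and $\pt^f$ as a bridge between the two domains, and then to replace the population transport cost $W_1(\ps,\pt^f)$ by its empirical analogue through a concentration bound. First I would introduce the auxiliary hypothesis $f^*$: since $\loss$ obeys the triangle inequality, $\loss(y,f(\x))\le\loss(y,f^*(\x))+\loss(f^*(\x),f(\x))$ for every $(\x,y)$, so taking the expectation over $(\x,y)\sim\pt$ gives $\err_T(f)\le\err_T(f^*)+\E_{\x\sim\mu_t}[\loss(f^*(\x),f(\x))]$. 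Because $\pt^f=(\x,f(\x))_{\x\sim\mu_t}$ is carried by the graph of $f$ and is exactly the second marginal of $\Pi^*$, the last expectation equals $\E_{((\x_s,y_s),(\x_t,y_t))\sim\Pi^*}[\loss(f^*(\x_t),y_t)]$. Two further triangle inequalities inside the integral, $\loss(f^*(\x_t),y_t)\le\loss(f^*(\x_t),f^*(\x_s))+\loss(f^*(\x_s),y_s)+\loss(y_s,y_t)$, followed by integration against $\Pi^*$, yield three terms: $\E_{\Pi^*}[\loss(f^*(\x_t),f^*(\x_s))]$, the quantity $\E_{\Pi^*}[\loss(f^*(\x_s),y_s)]=\err_S(f^*)$ (since the first marginal of $\Pi^*$ is $\ps$), and $\E_{\Pi^*}[\loss(y_s,y_t)]$.

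The term $\E_{\Pi^*}[\loss(f^*(\x_t),f^*(\x_s))]$ is where the probabilistic transfer Lipschitzness of $f^*$ with respect to (the $(\x_s,\x_t)$-projection of) $\Pi^*$ is used. Since $\loss$ is $k$-Lipschitz in its second argument and vanishes on the diagonal, $\loss(f^*(\x_t),f^*(\x_s))\le k|f^*(\x_t)-f^*(\x_s)|$; I would then split the expectation over the event $\{|f^*(\x_t)-f^*(\x_s)|\le\lambda d(\x_s,\x_t)\}$ and its complement. On the first event the integrand is at most $k\lambda d(\x_s,\x_t)$, on the second it is at most $kM$ by the hypothesis $|f^*(\x_1)-f^*(\x_2)|\le M$, and the complement has $\Pi^*$-probability at most $\phi(\lambda)$ by PTL, so $\E_{\Pi^*}[\loss(f^*(\x_t),f^*(\x_s))]\le k\lambda\,\E_{\Pi^*}[d(\x_s,\x_t)]+kM\phi(\lambda)$. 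Choosing $\alpha=k\lambda$, the sum $k\lambda\,\E_{\Pi^*}[d(\x_s,\x_t)]+\E_{\Pi^*}[\loss(y_s,y_t)]$ is precisely $\E_{\Pi^*}[\alpha d+\loss]=W_1(\ps,\pt^f)$ because $\Pi^*$ is optimal for that cost, so collecting everything gives the population bound $\err_T(f)\le W_1(\ps,\pt^f)+\err_S(f^*)+\err_T(f^*)+kM\phi(\lambda)$.

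It remains to pass from $W_1(\ps,\pt^f)$ to $W_1(\hat\ps,\hat{\pt^f})$. As $\alpha d+\loss$ is a metric on $\Omega\times\mathcal{C}$, $W_1$ satisfies the triangle inequality, so $W_1(\ps,\pt^f)\le W_1(\hat\ps,\hat{\pt^f})+W_1(\ps,\hat\ps)+W_1(\pt^f,\hat{\pt^f})$; applying the empirical-measure concentration bound of Theorem~\ref{th:W_1C} to each of the last two terms and taking a union bound over the two independent samples gives, for $N_s,N_t$ above the threshold $N$, that $W_1(\ps,\hat\ps)+W_1(\pt^f,\hat{\pt^f})\le\sqrt{\tfrac{2}{c'}\log(\tfrac{2}{\delta})}\bigl(\tfrac{1}{\sqrt{N_s}}+\tfrac{1}{\sqrt{N_t}}\bigr)$ with probability at least $1-\delta$, which is the claimed inequality. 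I expect the main obstacle to be the bookkeeping around $\Pi^*$: one must use that $\Pi^*$ couples the \emph{joint} distributions, so that $y_t=f(\x_t)$ holds $\Pi^*$-almost surely and $\err_S(f^*)$ enters with its correct weight, while the PTL hypothesis only constrains $f^*$ relative to the induced coupling of the \emph{marginals} $\mu_s$ and $\mu_t$; the concluding concentration step likewise needs the regularity assumptions inherited from Theorem~\ref{th:W_1C} to ensure $\hat{\pt^f}$ is close to $\pt^f$.
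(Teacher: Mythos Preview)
Your proof is correct and follows essentially the same strategy as the paper's: introduce $f^*$ via the triangle inequality for $\loss$, pass to the optimal coupling $\Pi^*$, handle the cross term $\loss(f^*(\x_s),f^*(\x_t))$ with $k$-Lipschitzness plus PTL, and finish with the triangle inequality for $W_1$ together with the Bolley--Guillin--Villani concentration bound. The only organizational difference is that the paper routes the argument through the quantity $|err_{T^f}(f^*)-err_S(f^*)|$ (written as an integral of a difference over $\Pi^*$ and then split via $|\loss(y_t,f^*(\x_t))-\loss(y_t,f^*(\x_s))|+|\loss(y_t,f^*(\x_s))-\loss(y_s,f^*(\x_s))|$), whereas you bound $E_{\Pi^*}[\loss(f^*(\x_t),y_t)]$ directly by two applications of the triangle inequality for $\loss$; your decomposition is slightly more direct, at the cost of using $\loss(y,y)=0$ in the step $\loss(f^*(\x_t),f^*(\x_s))\le k|f^*(\x_t)-f^*(\x_s)|$, which the paper's difference-based version does not need.
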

\begin{proof}
\begin{eqnarray}
\err_T(f)&=&\!\!\!\!\!E_{(\x,y)\sim P_t} \loss(y,f(\x))\nonumber\\
&\leq&\!\!\!\!\! E_{(\x,y)\sim P_t} \loss(y,f^*(\x))+\loss(f^*(\x),f(\x))\nonumber\\
&=&\!\!\!\!\!E_{(\x,y)\sim P_t} \loss(f(\x),f^*(\x))+err_T(f^*)\label{eq1:sym}\\
&=&\!\!\!\!\!E_{(\x,y)\sim \pt^f}\loss(f(\x),f^*(\x))+err_T(f^*)\label{eq1:prox}\\
&=&\!\!\!\!\!err_{T^f}(f^*)-err_S(f^*)+err_S(f^*)+err_T(f^*)\nonumber\\
&\leq&\!\!\!\!\!|err_{T^f}(f^*)-err_S(f^*)|+err_S(f^*)+err_T(f^*)\label{eq1:fin}
\end{eqnarray}
Line \eqref{eq1:sym} is due to the symmetry of the loss. Line \eqref{eq1:prox} comes from the fact that: \\ $E_{(\x,y)\sim \pt} L(f(\x),f^*(\x))=E_{(\x,f(\x))\sim \pt^f} L(f(\x),f^*(\x))\equaldef err_{T^f}(f^*(\x))$. \\

Now, we have
\begin{eqnarray}
\lefteqn{\left|err_{T^f}(f^*)-err_S(f^*)\right|}\nonumber\\
&=&\left|\int_{\Omega\times
                 \mathcal{C}}\loss(y,f^*(\x))(\pt^f(\boldsymbol{X}=\x,Y=y)-\ps(\boldsymbol{X}=\x,Y=y))d\x dy\right|\nonumber\\
&=&\left|\int_{\Omega\times        \mathcal{C}}\loss(y,f^*(\x))d(\pt^f-\ps)\right|\nonumber\\
%&\leq&\int_{\Omega\times        \mathcal{C}}\left|\loss(y,f^*(\x))\right|d(\pt^f-\ps)\nonumber\\%not correct!
&\leq&\int_{(\Omega\times \mathcal{C})^2} \left|\loss(y_t^f,f^*(\x_t))-\loss(y_s,f^*(\x_s))\right|\d\Pi^*((\x_s,y_s),(\x_t,y_t^f))\label{eq:KTth}\\
&=&\int_{(\Omega\times \mathcal{C})^2} \left|\loss(y_t^f,f^*(\x_t))-\loss(y_t^f,f^*(\x_s))+\right.\nonumber\\
&&\left.\loss(y_t^f,f^*(\x_s))-\loss(y_s,f^*(\x_s))\right|\d\Pi^*((\x_s,y_s),(\x_t,y_t^f))\nonumber\\
&\leq&\int_{(\Omega\times  \mathcal{C})^2} \left|\loss(y_t^f,f^*(\x_t))-\loss(y_t^f,f^*(\x_s))\right|\nonumber\\
&&+\left|\loss(y_t^f,f^*(\x_s))-\loss(y_s,f^*(\x_s))\right|\d\Pi^*((\x_s,y_s),(\x_t,y_t^f))\nonumber\\
%&\leq&\int_{(\Omega\times  \mathcal{C})^2} \left|\loss(y_t^f,f^*(\x_t))-\loss(y_t^f,f^*(\x_s))\right|+\\
%&&\left|\loss(y_t^f,f^*(\x_s))-\loss(y_s,f^*(\x_s))\right|\d\Pi^*((\x_s,y_s),(\x_t,y_t^f))\\
&\leq&\int_{(\Omega\times  \mathcal{C})^2} k\left|f^*(\x_t)-f^*(\x_s)\right|+\nonumber\\
&&\left|\loss(y_t^f,f^*(\x_s))-\loss(y_s,f^*(\x_s))\right|\d\Pi^*((\x_s,y_s),(\x_t,y_t^f))\label{eq:k-L}\\
&\leq&k*M*\phi(\lambda)+\int_{(\Omega\times  \mathcal{C})^2} k \lambda d(\x_t,\x_s)+\nonumber\\
&&\left|\loss(y_t^f,f^*(\x_s))-\loss(y_s,f^*(\x_s))\right|\d\Pi^*((\x_s,y_s),(\x_t,y_t^f))\label{eq:PL}\\
&\leq&\int_{(\Omega\times  \mathcal{C})^2} \alpha d(\x_s,\x_t)+\loss(y_t^f,y_s)\d\Pi^*((\x_s,y_s),(\x_t,y_t^f))+k*M*\phi(\lambda)\label{peqsymavfin}\\
&\leq&\int_{(\Omega\times  \mathcal{C})^2} \alpha d(\x_s,\x_t)+\loss(y_s,y_t^f)\d\Pi^*((\x_s,y_s),(\x_t,y_t^f))+k*M*\phi(\lambda)\label{peqfin}\\
&=&W_1(\ps,\pt^f)+k*M*\phi(\lambda).\label{peqfindernier}
\end{eqnarray}
Line \eqref{eq:KTth} is a consequence of the duality form of the Kantorovitch-Rubinstein theorem saying that  for any coupling  $\Pi \in \Pi(P_s,P^f_t)$, we have:
\begin{eqnarray*}
\lefteqn{\left|\int_{\Omega\times        \mathcal{C}}\loss(y,f^*(\x))d(\pt^f-\ps)\right|}\\
&=&\left|\int_{(\Omega\times \mathcal{C})^2} \loss(y_t^f,f^*(\x_t))-\loss(y_s,f^*(\x_s))\d\Pi((\x_s,y_s),(\x_t,y_t^f))\right|\\
&\leq&\int_{(\Omega\times \mathcal{C})^2} \left|\loss(y_t^f,f^*(\x_t))-\loss(y_s,f^*(\x_s))\right|\d\Pi((\x_s,y_s),(\x_t,y_t^f)).
\end{eqnarray*}

Since the inequality is true for any coupling, it is then also true for $\Pi^*$.
Inequality \eqref{eq:k-L} is due to the $k$-lipschitzness of the loss $\loss$ in its second argument. 
Inequality \eqref{eq:PL} uses the fact that $f^*$ and $\Pi^*$ verify the probabilistic transfer Lipschitzness property with probability  $1-\phi(\lambda)$, additionally, taking into account that the deviation between 2 instances with respect to $f^*$ is bounded by $M$ we have the additional term $kM\phi(\lambda)$ that covers the regions where the PTL does not hold.
\eqref{peqsymavfin} is obtained by the symmetry of $d$, the use of triangle inequality on $\loss$ and by replacing $k \lambda$ by $\alpha$. Other inequalities above are due the use of triangle inequality or properties of the absolute value. The last line~\eqref{peqfindernier} is due to the definition of $\Pi^*$.

%Ci-dessous plus d'actualité
%We can now take the infimum over all possible couplings between $\ps$ and $\pt^f$: 
%$$
%\inf_{\Pi\in\Pi(\ps,\pt^f)}\int_{(\Omega\times  \mathcal{C})^2} \alpha d(\x_s,\x_t)+\loss(y_s,y_t^f)\d\Pi((\x_s,y_s),(\x_t,y_t^f))=W_1(\ps,\pt^f).
%$$
%Since inequality~\eqref{peqfin} is true for any coupling, we have 
%\begin{equation*}
%|err_{T^f}(f^*)-err_S(f^*)|\leq W_1(\ps,\pt^f).
%\end{equation*}

Now, note that by the use of triangle inequality:
\begin{eqnarray}
W_1(\ps,\pt^f)&\leq&W_1(\ps,\hat{\ps})+W_1(\hat{\ps},\hat{\pt^f})+W_1(\hat{\pt^f},\pt^f)\\
&\leq&W_1(\hat{\ps},\hat{\pt^f})+\sqrt{\frac{2}{c'}\log(\frac{2}{\delta})}\left(\frac{1}{\sqrt{N_s}}+\frac{1}{\sqrt{N_t}}\right).\label{eqC}
\end{eqnarray}
Indeed, the cost function $\mathcal{D}((\x_s,y_s),(\x_t,y_t))=\alpha d(\x_1,\x_2)+\loss(y_1,y_2)$ defines a distance over $(\Omega\times\mathcal{L})^2$,  assuming that $\ps$ and $\pt^f$ have bounded support and the fact that our loss function is bounded,  we can apply Theorem~\ref{th:W_1C} (presented below) 
on $W_1(\ps,\hat{\ps})$ and $W_1(\hat{\pt^f},\pt)$ above (with probability $\delta/2$ each). The two settings may have different constants $N$ and $c'$ and and we consider the maximum $N$ and the minimum $c'$  that comply with both cases.\\

Combining inequalities \eqref{eq1:fin},  \eqref{peqfindernier}, inequality \eqref{eqC} and the use of the union bound, the theorem holds with probability at least $1-\delta$ for any $f\in\mathcal{H}$. $\Box$
\end{proof}

Note that, additionally to the analysis in the paper, a link can be made with classic generalization bounds when the two distributions are equal, {\it i.e.} $\ps=\pt$.
Indeed, if we can choose $f^*$ as the true labeling function on  source/target domains such that $f^*$ is strongly $\phi$-lipschitz w.r.t. $\Pi^*$ ({\it i.e.} $\phi(\lambda)$ is almost 0), then the bound is similar to a classic generalization bound: terms involving $f^*$ are null and using the same sample for source and target $d(\x_1,\x_2)=0$ w.r.t the best alignment. Thus, it remains only the label loss which corresponds to a classic supervised learning loss.

\section{Empirical concentration result for Wasserstein distance}
We give now the result from Bolley and co-authors used in the previous section.
\begin{theorem}[from \cite{Bolley07}, Theorem 1.1.]\label{th:W_1C}
Let $\mu$ be a probability measure in $Z$ so that for some $\alpha>0$ we have for any $\z'$ $\int_{\mathbb{R}^d}e^{\alpha dist(\z,\z')^2}d\mu<\infty$ and $\hat{\mu}=\frac{1}{N}\sum_{i=1}^N\delta_{z_i}$ be the associated empirical measure defined on a sample of independent variables $\{\z_i\}_{i=1}^N$ drawn from $\mu$. Then, for any $d'>dim(Z)$ and $c'<c$, there exists some constant $N_0$ depending on $d'$ and some square exponential moments of $\mu$ such that for any $\epsilon >0$ and $N\geq N_0 \max(\epsilon^{-(d'+2)},1)$,
$$
P[W_1(\mu,\hat{\mu})>\epsilon]\leq \exp\left(-\frac{c'}{2}N\epsilon^2\right)
$$
where $c'$ can be calculated explicitly.
\end{theorem}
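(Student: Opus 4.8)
The plan is to establish this concentration inequality by combining a transportation--entropy inequality for $\mu$ with a discretization (covering) argument controlling the empirical measure on a finite partition; this is the strategy behind the proof in \cite{Bolley07}. First I would use the square-exponential integrability hypothesis $\int e^{\alpha\, dist(\z,\z')^2}\,d\mu<\infty$ to certify that $\mu$ satisfies a $T_1$ transportation inequality: there is a constant $c>0$ (the $c$ appearing in the statement) such that for every probability measure $\nu$,
\begin{equation*}
W_1(\nu,\mu)\leq\sqrt{\tfrac{2}{c}\,H(\nu\,|\,\mu)},
\end{equation*}
where $H(\nu\,|\,\mu)$ is the relative entropy. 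The equivalence between a finite Gaussian moment and a $T_1$ inequality is the Bobkov--G\"{o}tze / Djellout--Guillin--Wu characterization; with it, the event $\{W_1(\nu,\mu)>\epsilon\}$ forces $H(\nu\,|\,\mu)>c\epsilon^2/2$, which is exactly the decay rate we target.

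The heuristic is Sanov's theorem: $\hat{\mu}$ obeys a large-deviation principle with rate $H(\cdot\,|\,\mu)$, so $P[W_1(\mu,\hat{\mu})>\epsilon]$ decays like $\exp(-N\inf\{H(\nu\,|\,\mu):W_1(\nu,\mu)>\epsilon\})\leq\exp(-Nc\epsilon^2/2)$. The real work is to make this non-asymptotic on the non-compact space $Z$. To that end I would discretize: fix a radius $R$ and a mesh $\delta$, cover the ball $B_R$ by $k\sim(R/\delta)^{d'}$ cells of diameter at most $\delta$, and push both $\mu$ and $\hat{\mu}$ forward onto the resulting finite partition. Three error sources must then be controlled: the within-cell displacement (at most $\delta$), the mass placed outside $B_R$ by $\mu$ and by $\hat{\mu}$ (bounded by a Chernoff estimate on the square-exponential tail, where the hypothesis is used a second time), and the $W_1$ distance between the two discretized measures, which is at most $R$ times the $\ell^1$ deviation $\sum_i|\hat{p}_i-p_i|$ of the cell frequencies.

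For this last, finite-dimensional piece I would invoke a multinomial concentration bound (Bretagnolle--Huber--Carol), namely
\begin{equation*}
P\!\left[\sum_{i=1}^{k}|\hat{p}_i-p_i|>t\right]\leq 2^{k}\exp\!\left(-\tfrac{N t^2}{2}\right).
\end{equation*}
Choosing $\delta$ and $R$ as suitable functions of $\epsilon$ reduces the target event $\{W_1(\mu,\hat{\mu})>\epsilon\}$ to a frequency-deviation event of size $t\sim\epsilon/R$, and the bound above supplies the exponential decay $\exp(-\tfrac{c'}{2}N\epsilon^2)$. The main obstacle --- and the reason for both the slightly degraded constant $c'<c$ and the sample-size threshold $N\geq N_0\max(\epsilon^{-(d'+2)},1)$ --- is absorbing the combinatorial prefactor $2^{k}=2^{(R/\delta)^{d'}}$ into the exponential: the covering grows polynomially in $1/\delta$ with exponent $d'$, so the entropy term $k\log 2$ can be dominated by $\tfrac{c}{2}N\epsilon^2$ only once $N$ exceeds a threshold scaling like $\epsilon^{-(d'+2)}$. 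Balancing the mesh, the radius, and this combinatorial cost against the concentration rate while keeping the tail contribution subdominant is the delicate part of the argument, and is precisely what forces the explicit dimension-dependent constants in the statement.
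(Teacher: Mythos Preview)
The paper does not give its own proof of this statement: it is quoted verbatim from \cite{Bolley07} and accompanied only by the one-sentence remark that the square-exponential moment condition implies a $T_1(c)$ transportation inequality, which fixes the constant $c$. Your sketch is therefore not to be compared with anything in the present paper, but it is an accurate outline of the argument in the cited source: Gaussian integrability $\Rightarrow$ $T_1$ via Bobkov--G\"otze / Djellout--Guillin--Wu, a Sanov-type heuristic made quantitative by discretizing on a ball of radius $R$ with mesh $\delta$, a multinomial deviation bound on the cell frequencies, and a balancing of the covering entropy $2^{(R/\delta)^{d'}}$ against $N\epsilon^2$ that produces both the degraded constant $c'<c$ and the threshold $N\gtrsim \epsilon^{-(d'+2)}$. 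Nothing further is needed here.
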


Note that $c$ is such that $\mu$ verifies for any measure $\nu$ the Talagrand (transport) inequality  $T_1(c): W_1(\mu,\nu)\leq \sqrt{\frac{2}{c}H(\nu|\mu)}$ with $H$ is the relative entropy. $T_1(c)$ holds when for some $\alpha>0$ and for any $\z'$: $\int_{\mathbb{R}^d}e^{\alpha dist(\z,\z')^2}d\mu(\z)<\infty$, and $c$ can be found explicitly \cite{Bolley07}.

%\section{BCD algorithm on real data}

\end{document}